\documentclass[10pt]{article} %

\usepackage[accepted]{rlj} %

\usepackage{amssymb}            %
\usepackage{mathtools}          %
\usepackage{mathrsfs}           %
\usepackage{graphicx}           %
\usepackage{subcaption}         %
\usepackage[space]{grffile}     %
\usepackage{url}                %
\usepackage{lipsum}             %

\usepackage[utf8]{inputenc} %
\usepackage[T1]{fontenc}    %
\usepackage{booktabs}       %
\usepackage{amsfonts}       %
\usepackage{nicefrac}       %
\usepackage{microtype}      %

\usepackage{wrapfig}
\usepackage{amsthm}
\usepackage{comment}
\usepackage{cleveref}
\usepackage{thmtools}
\usepackage{thm-restate}
\usepackage{float}

\newtheoremstyle{compact}
  {2mm} %
  {1mm} %
  {\em} %
  {} %
  {\bfseries} %
  {.} %
  {.5em} %
  {} %
\theoremstyle{compact}

\newtheorem*{lemma*}{Lemma}%

\newtheoremstyle{named}{2mm}{1mm}{\em}{}{\bfseries}{.}{.5em}{\thmnote{#3}}
\theoremstyle{named}

\setitemize{itemsep=4pt,topsep=0pt,parsep=-3pt,partopsep=0pt}
\setenumerate{itemsep=4pt,topsep=0pt,parsep=-3pt,partopsep=0pt}

\title{Generalization in offline RL: The structure is more important than the amount of pessimism}

\setrunningtitle{Generalization in offline RL: The structure is more important than the amount of pessimism}

\author{
    Max Weltevrede\textsuperscript{1}, Matthijs T. J. Spaan\textsuperscript{1}, Wendelin B\"ohmer\textsuperscript{1} 
}

\emails{m.r.weltevrede@tudelft.nl}

\affiliations{
$^{1}$\textbf{Delft University of Technology, Delft, The Netherlands}
}

\begin{document}
\maketitle  %

\begin{abstract}
While pessimism counteracts overestimation bias in offline reinforcement learning (RL), being overly conservative has been associated with hindering certain forms of generalization. However, in this paper we demonstrate that being overly pessimistic does not inherently prevent optimal generalization in contextual MDPs (CMDPs). Instead, we argue successful generalization depends not on the amount of pessimism, but whether the pessimistic structure respects the underlying symmetries of the optimal solution. We prove that a mildly pessimistic, non-symmetric value function can generalize worse than an overly pessimistic, symmetric one. In offline RL, the structure of the pessimism is determined by the structure of the dataset coverage. As such, enforcing a symmetric value function can be non-trivial, and might require techniques such as data augmentation (DA). Inspired by our theoretical results, we argue that DA can best be applied through a consistency loss during policy extraction, rather than the common practice of (regular) offline training on an augmented dataset. This is empirically validated using IQL and CQL on a rotationally symmetric reacher environment. 
\end{abstract}

\section{Introduction}
The objective of offline reinforcement learning (RL) is to find a policy that maximizes expected returns in an environment, by only training on a given, fixed dataset of collected experiences \citep{levine_offline_2020}. In order to avoid common offline RL pathologies, many methods employ pessimistic value learning or other forms of conservatism \citep{kumar_conservative_2020, kostrikov_offline_2022, fujimoto_minimalist_2021, an_uncertainty-based_2021}. However, being overly conservative has been associated with hindering generalization to out-of-distribution actions or impeding trajectory stitching in continuous environments \citep{wang_improving_2024, ma_reining_2023, mediratta_generalization_2024, park_is_2024}. Recently, several approaches have tried to reduce the level of conservatism in order to improve generalization \citep{mao_doubly_2024, lyu_mildly_2022, shimizu_strategically_2024}, even claiming that, while some level of conservatism is necessary to find the best policy, it should be as mild as possible to improve this kind of generalization \citep{lyu_mildly_2022}. However, it is not yet clear whether this notion also holds for generalization to new scenarios or environments.

In this paper, we argue that it is not the \emph{amount} of pessimism, but its \emph{structure} that is inherently important for generalization in the zero-shot policy transfer \citep[ZSPT,][]{kirk_survey_2023} setting to new and unseen testing environments. Fundamentally, we argue that generalization is about learning the underlying structure of the optimal policy or value function over training and testing states, often formalized as symmetries. For the theoretical analysis in this paper, we use the generalization-through-invariance ZSPT \citep[GTI-ZSPT,][]{weltevrede_how_2025} setting, in which optimal generalization is achieved by learning the correct symmetry from the training data. In this setting, we prove that as long as the pessimistic value learning is symmetric, optimal generalization performance can be achieved even with \emph{arbitrary large} levels of pessimism. Furthermore, we prove that for certain instances of the GTI-ZSPT problem, there exists mild forms of pessimism, that violate this symmetric structure, that are guaranteed to generalize arbitrarily worse than overly pessimistic, but symmetric ones.

As such, we argue that optimal generalization is not determined by how pessimistic the agent is, but by whether the pessimism breaks the symmetries the optimal solution should have. Although this notion is to some extent true in general, we argue it is especially important in offline RL, where pessimistic structure can be induced by non-symmetric dataset coverage, and where current techniques are known to not generalize well \citep{mediratta_generalization_2024}. If the dataset-induced pessimism explicitly contradicts the required symmetries, optimal generalization might only be achievable through techniques such as data augmentation (DA), highlighting its use for offline RL in particular. Our theory demonstrates that for generalization the symmetry of the learned value function is more important than its accuracy (i.e., how pessimistic it is). As such, we argue that applying DA through a consistency loss during policy extraction \citep{bachman_learning_2014, yang_sample_2023, raileanu_automatic_2021, hansen_generalization_2021}, which emphasizes symmetry over accuracy, should improve generalization the most. This contrasts to previous work on DA for offline RL, which exclusively uses DA only to generate a larger, augmented dataset, followed by regular offline training \citep{pinneri_equivariant_2023, corrado_guided_2024, sinha_s4rl_2021, cho_s2p_2022, jang_k-mixup_2023, huang_goal-conditioned_2025, lee_gta_2024, yang_rtdiff_2025}. We empirically validate several ways of applying DA for rotational invariance in a rotationally symmetric continuous control environment (rotational reacher), for two common offline RL algorithms (IQL \citep{kostrikov_offline_2022} and CQL \citep{kumar_conservative_2020}), and demonstrate that the consistency loss improves generalization the most.

\section{Background}
\label{sec:background}
In offline RL, the agent receives a fixed dataset of transitions $\mathcal{D} = \{s, a, s', r \}^n$, collected by an effective behavior policy $\pi_\beta$ in a Markov decision process (MDP) $\mathcal{M} = (S, A, T, R, p_0, \gamma)$. The MDP is defined by a state space $S$, an action space $A$, a transition function $T: S \times A \to \Delta^{|S|}$ (where $\Delta^n$ refers to the $n$-simplex), a reward function $R: S \times A \to \mathbb{R}$, a starting state distribution $p_0: \Delta^{|S|}$, and a discount factor $\gamma \in [0, 1)$. The goal is to find a policy $\pi: S \to \Delta^{|A|}$, that maximizes the expected return in $\mathcal{M}$, $J^\pi = \mathbb{E}_\pi[\sum_{t=0}^\infty \gamma^t r_t]$, defined as the sum of discounted rewards $r_t$. $\mathbb{E}_\pi$ denotes an expectation over the Markov chain $\{s_0, a_0, r_0, s_1, a_1, r_1, ... \}$ induced by the policy $\pi$ in $\mathcal{M}$ \citep{akshay_steady-state_2013}. Instead of directly finding the optimal policy $\pi^* = \text{argmax}_{\pi} J^\pi$, some approaches learn a Q-value function $Q^\pi(s,a) = \mathbb{E}_\pi[\sum_{t=0}^\infty \gamma^t r_t |^{s_0 = s}_{a_0 = a}]$, from which a greedy policy $\pi_{Q}(s) = \text{argmax}_{a \in A} Q(s,a)$ can be derived. A useful construct to define is the \emph{on-policy} distribution $\rho^\pi_{\mathcal{M}}$, which denotes the distribution over states that a policy $\pi$ would visit in the MDP $\mathcal{M}$. Additionally, we define the optimality gap as the difference between the optimal return and the return for some policy $\pi$: $J^\Delta(\pi) = J^{\pi^*} - J^\pi \ge 0$. 

In a contextual MDP \citep[CMDP,][]{hallak_contextual_2015} the state space can, in principle, be decomposed ($S = S' \times C$) into an underlying state space $S'$ and context space $C$, where $c \in C$ is sampled at the start of an episode and cannot change thereafter. Since $c$ is part of the state $s$, it can influence the starting state distribution $p_0$, transition function $T$, and reward function $R$. As such, a context $c$ can be thought of as defining a specific task or environment. In the zero-shot policy transfer \citep[ZSPT,][]{kirk_survey_2023} setting for a CMDP $\mathcal{M}|_C$ with context space $C$, the agent gets to train in a fixed set of training contexts $C_{train} \subset C$ and is evaluated zero-shot on a held-out set of testing contexts $C_{test} \subset C$, where $C_{train} \cap C_{test} = \emptyset$. In our work, we consider \emph{in-distribution} generalization, meaning $C_{train}$ and $C_{test}$ are sampled from the same distribution over $C$.  

\label{sec:offline}
\textbf{Pessimism } In order to avoid the overestimation bias for out-of-distribution (OOD) actions (actions not observed in $\mathcal{D}$), several offline RL approaches regularize value learning in order to learn a \emph{pessimistic value function} $\hat{Q}^\pi$, that is a lower bound on the true Q value $Q^\pi$:  $\hat{Q}^\pi(s,a) \le Q^\pi(s,a), \: \forall s \in \mathcal{D}, a \in A$ \citep{kumar_stabilizing_2019, kumar_conservative_2020, an_uncertainty-based_2021, fujimoto_off-policy_2019}. How pessimistic a particular value function $\hat{Q}^\pi$ is, depends on how close it is to the true Q value $Q^\pi$. This can in principle be measured in several different ways, for example with the maximal difference between the pessimistic and true values: $\eta_{max} = \max \{Q^\pi(s,a) - \hat{Q}^\pi(s,a) | s \in \mathcal{D}, a \in A\}$.

For the theoretical results, we want to isolate the generalization effect of learning a particular pessimistic value function, from the offline RL mechanisms that cause that value function to be pessimistic. This can be achieved by minimizing the following Q-value distillation loss: 
\begin{align}
\label{eq:distil}
    l_Q(\theta, \mathcal{D}_s, \hat{Q}^\pi) =  \frac{1}{n} \sum_{s \in \mathcal{D}_s} \big|\big| q_\theta(s) - \hat{Q}^\pi(s) \big|\big|^2_2
\end{align}
for a given set of pessimistic value targets $\hat{Q}^\pi:S \to \mathbb{R}^{|A|}$.  Here,  $\mathcal{D}_s$ is now a collection of states and $q_\theta: S \to \mathbb{R}^{|A|}$ is a neural network with parameters $\theta$ trained to predict $\hat{Q}^\pi$ on $\mathcal{D}_s$. We consider this Q-value distillation setting as a proxy for the effects of pessimism in offline RL, as it can identify the consequences associated with learning a given pessimistic value function with a deep neural network. 

\textbf{Symmetry Groups } The theoretical analysis in this paper relies on the concepts of symmetry groups and the behavior of neural networks in the infinite width limit. A symmetry group is a set of transformations $G$ and a group operation $\circ$ that satisfy the group axioms: closure, associativity, and containing the identity element and the inverse.\footnote{We abuse notation by omitting the operation in this paper: $g_1 \circ g_2 \to g_1 g_2$.} A group is an abstract mathematical structure that can be represented in various ways. In particular, the \emph{group representation} $\psi_X$ is the representation of group $G$ as operating on a vector space $X$. In this paper, we always assume the group represenation is orthogonal: $\psi^{-1} = \psi^T$. A function $f: X \to Y$ is \emph{equivariant} to a symmetry group $G$ if $f(\psi_X(g) x) = \psi_Y(g)^{-1} f(x), \quad \forall x \in X, g \in G$. Invariance is a special case of equivariance where the output representation is the identity operator $\psi_Y(g) = \mathbb{I}, \forall g \in G$. A \emph{subgroup} $B \le G$ is a subset of $G$ that is itself a group. Finally, a (sub)group is said to be finite if the set has finite size. We refer to Appendix \ref{app:background-symm} for more background on symmetry groups. 

\label{sec:da}
\textbf{Data Augmentation } A common way to train a neural network to become equivariant is to perform data augmentation under the group $G$.  \emph{Full data augmentation} for a finite group $G$ corresponds to performing regular training on the \emph{augmented dataset} that is generated by applying every transformation from $G$ to each input-output pair in the original dataset. This form of DA has nice theoretical properties \citep{gerken_emergent_2024}, but can be computationally expensive in practice.

\subsection{Generalization-through-invariance}
In this paper, we analyze the importance of correctly learning the underlying structures required for an optimal value or policy. However, proving whether a neural network learns the underlying structure encoded in the data is not easy. For this reason, we formalize the structures as group symmetries and use the ZSPT setting introduced in \citet{weltevrede_how_2025}, that defines generalization as the ability of an agent to become invariant to these symmetries. In this \emph{generalization-through-invariance} ZSPT (GTI-ZSPT) the agent has to become invariant to a symmetry group $G$, by only training on data  conforming to a subgroup $B \le G$:\footnote{We slightly adapt the definition so that the entire vector of optimal Q values $Q^*(s): S \to \mathbb{R}^{|A|}$ satisfies the symmetry $G$, rather than just the policy $\pi^*$ as in \citet{weltevrede_how_2025}}
\begin{restatable}[Generalisation through invariance ZSPT]{definition}{maindef}
\label{def:gti-zspt}
    Let $\mathcal{M}|_{C}$ be a CMDP and let $C_{train}, C_{test} \subset C$ be a set of training and testing contexts that define a ZSPT problem. Additionally, let $\pi^*$ be the optimal policy in $\mathcal{M}|_{C}$, $S^{\pi^*}_{\mathcal{M}|_{C}} = \{s \in S | \rho^{\pi^*}_{\mathcal{M}|_{C}}(s) > 0 \}$ denote the set of states with non-zero support under the on-policy distribution $\rho^{\pi^*}_{\mathcal{M}|_{C}}$ in CMDP $\mathcal{M}|_{C}$. 
    In the generalisation through invariance ZSPT (GTI-ZSPT), the sets $S^{\pi^*}_{\mathcal{M}|_{C}}$ and $S^{\pi^*}_{\mathcal{M}|_{C_{train}}}$ admit a symmetric structure:
    \begin{align*}
    S^{\pi^*}_{\mathcal{M|}_{C}} &= \{ \psi_S(g) s| g \in G, s \in \bar{S} \} \\
    S^{\pi^*}_{\mathcal{M|}_{C_{train}}} &= \{ \psi_S(b) s| b \in B, s \in \bar{S} \}, \quad B \le G
    \end{align*}
    where $\bar{S} \subset S^{\pi^*}_{\mathcal{M|}_{C_{train}}}$ is a proper subset of $S^{\pi^*}_{\mathcal{M|}_{C_{train}}}$ and $G$ is a non-trivial symmetry group (and $B \le G$ a finite subgroup) that leaves the optimal Q vector invariant: $Q^*(s) = Q^*(\psi_S(g)s), \forall s \in \bar{S} \text{ and } \forall g \in G$. 
\end{restatable}
This defines a non-trivial generalization setting since the agent has to achieve full symmetry under $G$, by only witnessing limited examples of this symmetry in the training contexts (corresponding to a subgroup $B$).  In \citet{weltevrede_how_2025}, they demonstrate the group symmetric structure of this ZSPT setting allows for theoretically proving upper bounds on the optimality gap achieved in the testing contexts. They also demonstrate empirically that the insights from the theoretical analysis in the GTI-ZSPT setting can hold more broadly, in particular when the environment no longer satisfies this strict group symmetric structure. Note that the theoretical analyses in \citet{weltevrede_how_2025} and this paper assume optimality in the training contexts to isolate the contribution of generalization to the test performance. As a result, the optimality gap in the testing contexts is equal to the generalization gap. This is the reason why the symmetric structure of the GTI-ZSPT setting is defined only over the optimal state distributions (rather than any policy's state distribution). 

\textbf{Example} An example of the GTI-ZSPT setting is the \emph{Rotational Reacher} problem in Figure \ref{fig:rotational_reacher}. Here, the states encountered in the four training contexts can be generated from the states in context 1 and the application of the subgroup $B= C_4$ of $90^\circ$ rotations. The agent's goal is to become invariant to any rotation (corresponding to the full group $G = SO(2)$), after only training on this subgroup $B= C_4$. See \citet{weltevrede_how_2025} for more details on why this example satisfies the GTI-ZSPT assumptions.

\begin{figure}[h]
    \centering
    \includegraphics[width=0.8\textwidth]{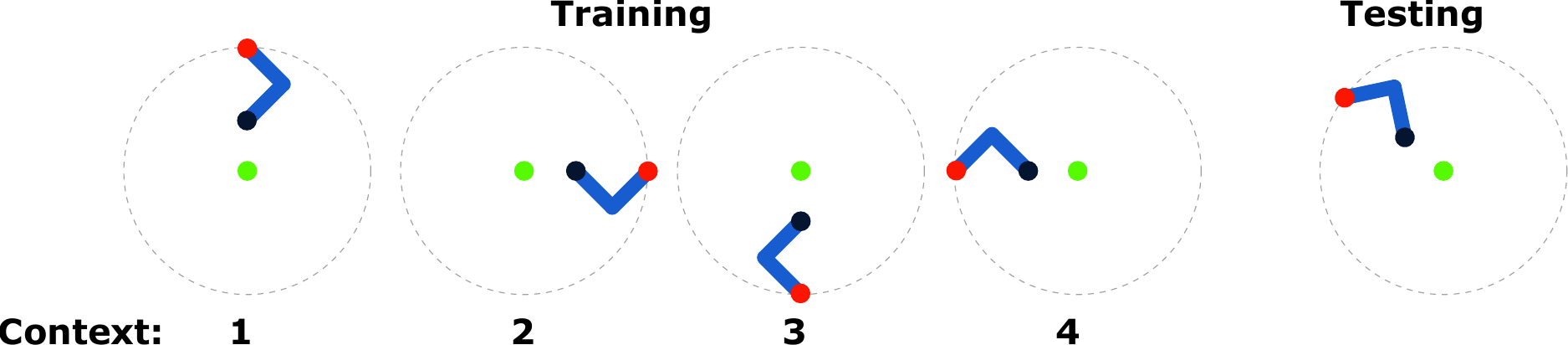}
    \caption{This Rotational Reacher CMDP has four training contexts, defined by the location of the shoulder (red) along a circle (dotted). The goal of the agent is to apply torque to the shoulder and elbow so that the hand (black) reaches the center (green). The training contexts satisfy the $C_4$ ($90^\circ$ rotation) symmetry, and the agent is tested on the full group of rotations $SO(2)$. Figure with permission taken from \citet{weltevrede_how_2025}.}
    \label{fig:rotational_reacher}
\vspace{-5mm}
\end{figure}

\section{Theoretical analysis of pessimism and generalization}
\label{sec:theory}
The main goal of our theoretical analysis is to argue that it does not necessarily matter \emph{how} pessimistic the agent is, but rather that the pessimism does not violate the structure (i.e., symmetry) that the optimal solution should have. To do this, we analyze the generalization effects of different pessimistic structures by proving the generalization performance of an infinitely wide neural network trained with Q-value distillation in the GTI-ZSPT setting: 

\begin{restatable}[Q-value distillation in the GTI-ZSPT]{definition}{distildef}
\label{def:distil-zspt}
    Consider Q-value distillation (Equation \eqref{eq:distil} in Section \ref{sec:offline}) in the GTI-ZSPT setting with an infinitely wide neural network $q_\theta: S \to \mathbb{R}^{|A|}$ with Lipschitz continuous derivatives with respect to its parameters. Let $q_\theta$ be distilled on the set of states $\mathcal{D}_s$ corresponding to the on-policy states $S^{\pi^*}_{\mathcal{M|}_{C_{train}}}$ for the optimal policy in the training contexts. These on-policy states correspond to a subgroup $B \le G$ of the full symmetry $G$ the agent has to learn in order to generalize: $S^{\pi^*}_{\mathcal{M|}_{C_{train}}} = \{ \psi_S(b) s| b \in B, s \in \bar{S} \}$ (see Definition \ref{def:gti-zspt} of the GTI-ZSPT setting). Let $\hat{Q}^*$ be a set of pessimistic Q-value targets for $\mathcal{D}_s$ and let the argmax policy $\pi_{\hat{Q}^*}$ over $\hat{Q}^*$ be optimal in the training contexts $C_{train}$. This is the case if for any state $s \in S^{\pi^*}_{\mathcal{M|}_{C_{train}}}$ encountered by the optimal policy, the largest pessimistic value for any of the optimal actions $\max_{a \in A_{opt}(s)} \hat{Q}^*(s, a)$ is still higher than the largest value for any of the suboptimal actions $\max_{a \in A_{opt}^C(s)} \hat{Q}^*(s, a)$, where the optimality of the actions is defined with respect to the true optimal Q-value: $A_{opt}(s) = \{a \in A | Q^*(s,a) = \max_{a' \in A} Q^*(s,a') \}$ and $A_{opt}^C(s) = A \backslash A_{opt}(s)$. Let $\delta_Q$ denote the minimal difference: 
    \begin{equation*}
        \max_{a \in A_{opt}(s)} \hat{Q}^*(s, a) - \max_{a \in A_{opt}^C(s)} \hat{Q}^*(s, a) \ge \delta_Q, \: \forall s \in \mathcal{D}_s 
    \end{equation*}
    Formally, we require $\delta_Q > 0$ for the policy $\pi_{\hat{Q}^*}$ to be optimal in the training contexts. 
\end{restatable}

\subsection{The importance of symmetric pessimism}
Our main theorem proves that as long as the pessimistic value function is symmetric, the level of pessimism can be arbitrarily large without hurting generalization performance. 
\begin{restatable}{theorem}{MainTheorem}
\label{thrm:main}
    Consider Q-value distillation in the GTI-ZSPT setting as defined in Definition \ref{def:distil-zspt}, with pessimistic Q-value targets $\hat{Q}^*_{sym}$ that satisfy the symmetry group $B \le G$ in the training contexts: $\hat{Q}^*_{sym}(s) = \hat{Q}^*_{sym}(\psi_S(b) s), \: \forall b \in B, s \in \mathcal{D}_s$. If the minimal distance between the largest pessimistic optimal value, and largest pessimistic suboptimal value ($\delta_Q$), satisfies $\delta_Q \ge C_\Theta(\epsilon)$, the performance of the argmax policy $\pi_{q_\theta}$ is guaranteed to be optimal in the testing CMDP $\mathcal{M|}_{C_{test}}$ with probability $1 - \epsilon$, for \textbf{arbitrarily large} levels of pessimism $\eta_{max}$. The condition $C_\Theta(\epsilon)$ depends on the NTK $\Theta$ (i.e., network architecture), the dataset $\mathcal{D}_s$, the optimal Q-value $Q^*$ and the confidence level $\epsilon$, but notably, is independent of $\eta_{max}$.
\end{restatable}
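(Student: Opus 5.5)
The plan is to work in the NTK regime, where the trained infinitely wide network is a Gaussian process. Its mean on any test state $s$ is the kernel regression predictor
\begin{equation*}
\mu(s) = \Theta(s,\mathcal{D}_s)\,\Theta(\mathcal{D}_s,\mathcal{D}_s)^{-1}\hat{Q}^*_{sym}(\mathcal{D}_s),
\end{equation*}
and its covariance $\Sigma(s)$ depends only on the NTK/NNGP kernels and $\mathcal{D}_s$, not on the targets. This is the standard result for networks with Lipschitz parameter-derivatives trained by gradient flow on the squared loss \eqref{eq:distil}, applied independently per output coordinate. The reduction is: show that the mean at every test state $\psi_S(g)s$ with $s \in \bar{S}$ inherits the correct argmax from the training states; use a Gaussian tail bound over the finitely many actions (and, if needed, finitely many orbit representatives or a union bound) to turn this into probability $1-\epsilon$; and read off $C_\Theta(\epsilon)$ as the margin that absorbs both the fluctuation and the residual asymmetry of the kernel.

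\textbf{Step 1 (exploiting $B$-symmetry).} I will assume, as in \citet{weltevrede_how_2025}, that the NTK is invariant under orthogonal group actions: $\Theta(\psi(g)x,\psi(g)x') = \Theta(x,x')$. Since $\mathcal{D}_s$ is a union of $B$-orbits and the targets are constant on $B$-orbits, the predictor only sees, for each $\bar s \in \bar S$, the orbit-averaged kernel $\sum_{b\in B}\Theta(s,\psi(b)\bar s)$. I will then decompose the mean at a test state $\psi(g)s$ into two parts: the mean at the corresponding training representative, which equals the target by interpolation, plus a residual that is linear in the target vector. The key trick is to split the targets as $\hat{Q}^*_{sym} = Q^* - \Delta$, where $\Delta \ge 0$ is the pessimism. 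Both $Q^*$ and $\Delta$ are $B$-invariant, because $Q^*$ is $G$-invariant.

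\textbf{Step 2 (removing $\eta_{max}$).} This is the main obstacle. The worry is that the residual from $\Delta$ scales with $\eta_{max}$. The target claim is that, for a $B$-invariant target vector, the predictor at $\psi(g)s$ is a fixed linear functional $w_g^\top$ of the orbit-wise targets, and that the argmax is preserved whenever the margin at training states exceeds a bound depending only on $w_g$, $Q^*$ and the variance, not on $\Delta$.

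My first attempt is the following. Per action $a$, the predicted value at $\psi(g)s$ is $\sum_{\bar s} w_g(\bar s)\hat{Q}^*(\bar s,a)$, and the same weights $w_g$ are used for every action. So the difference between an optimal and a suboptimal action is
\begin{equation*}
\sum_{\bar s} w_g(\bar s)\,\bigl[\hat{Q}^*(\bar s,a^*) - \hat{Q}^*(\bar s,a)\bigr].
\end{equation*}
If $w_g$ is concentrated on $\bar s = s$, i.e. the test state is kernel-closest to its own orbit, this difference is at least $\delta_Q$ minus a cross-orbit leakage term. The leakage involves action-value differences at other orbit representatives. I will need to argue that this leakage can be bounded via $Q^*$ and $\delta_Q$ rather than through the raw magnitude of $\Delta$. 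The crucial point is that only differences across actions enter, never absolute levels, so a uniform shift in pessimism cancels.

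If a fully $\eta$-free bound fails for arbitrary $\Delta$, I will fall back to a weaker but sufficient version. I would restrict to the case where $\bar S$ contributes a single orbit per context, or where the Gram matrix is block-diagonal across orbits. In that case $w_g$ is supported on a single orbit and only $\delta_Q$ matters.

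\textbf{Step 3 (handling the randomness).} For each test state and each suboptimal action, the difference of GP outputs is Gaussian. Its mean is at least $\delta_Q\,\kappa_\Theta$, where $\kappa_\Theta>0$ is an orbit-kernel ratio from Step 2. Its variance $\sigma^2$ is independent of the targets. Requiring $\delta_Q\kappa_\Theta \ge \sigma\,\Phi^{-1}(1-\epsilon/|A|)$ and taking a union bound yields $C_\Theta(\epsilon)=\sigma\Phi^{-1}(1-\epsilon/|A|)/\kappa_\Theta$, possibly plus a $Q^*$-dependent leakage term. This depends on $\Theta$, $\mathcal{D}_s$, $Q^*$ and $\epsilon$ only. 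Optimality of the argmax on all of $S^{\pi^*}_{\mathcal{M}|_{C}}$ then implies optimality in $\mathcal{M}|_{C_{test}}$, since the induced trajectory stays within those symmetric states.
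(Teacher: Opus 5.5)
The gap is Step 2, and in the stated generality it cannot be closed. At a test state $s'=\psi_S(g)s$ your margin contains the cross-orbit leakage $\sum_{\bar s'\neq s} w_g(\bar s')\,[\hat Q^*(\bar s',a^*)-\hat Q^*(\bar s',a)]$. No hypothesis controls the pessimism part of these differences. $\delta_Q$ only lower-bounds, at each training state, the gap between that state's own optimal actions and its best suboptimal one; it says nothing about how heavily an action that is suboptimal at $\bar s'$ is penalized there.

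Concretely, take any admissible $\Delta$ and add $M$ to $\Delta(\psi_S(b)\bar s',a')$ for all $b\in B$ and some $a'$ suboptimal at $\bar s'$. The targets stay $B$-invariant and pessimistic, $\delta_Q$ cannot decrease, and the GP covariance is unchanged, but $\mu(s',a')$ moves by $-M\,w_g(\bar s')$. If $w_g(\bar s')<0$, pick $a'$ suboptimal at both $s$ and $\bar s'$. If $w_g(\bar s')>0$, pick $a'$ to be the (unique) optimal action of $s$, when it is suboptimal at $\bar s'$. Either way the argmax at $s'$ is wrong once $M$ is large. So the cancellation of uniform shifts is only a special case, and your ``$Q^*$-dependent leakage term'' is really $\Delta$-dependent.

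Your fallback does not fully deliver either. Block-diagonality of $\Theta(\mathcal{D}_s,\mathcal{D}_s)$ does not make $w_g=\Theta(s',\mathcal{D}_s)\Theta(\mathcal{D}_s,\mathcal{D}_s)^{-1}$ supported on one orbit, because the cross-kernel $\Theta(\psi_S(g)s,\psi_S(b)\bar s')$ between a test state and the other orbits is generally nonzero. Only the genuine single-orbit case $|\bar S|=1$ works. The Gram matrix is then group-circulant with $\mathbf{1}$ as an eigenvector, so $\mu(s')=c(s')\,\hat Q^*_{sym}(s)$ with $c(s')=\Theta(s',\mathcal{D}_s)\Theta(\mathcal{D}_s,\mathcal{D}_s)^{-1}\mathbf{1}=\sum_{b}\Theta(s',\psi_S(b)s)/\sum_{b}\Theta(s,\psi_S(b)s)$, and the margin $c(s')\delta_Q$ is genuinely $\eta$-free. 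Even there you still have to prove $c(s')>0$: your $\kappa_\Theta$ is asserted, and kernel-regression weights can be negative. What you get is a restricted theorem, not the stated one.

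The paper does not work with action differences. It bounds the whole vector $\|\hat Q^*_{sym}(\bar s)-q_\theta(\psi_S(g)\bar s)\|_\infty$ in three steps:
Lemma 1 combines NTK invariance, Lipschitz continuity of the parameter-gradients and $\kappa_S=\max_{g\in G}\min_{b\in B}\|\psi_S(g)-\psi_S(b)\|_{op}$. This places the ensemble mean at a test state within $C_\Theta(s,y)$ of the target at its training representative. It is the quantitative mechanism, absent from your proposal, that makes the weights concentrate on the own orbit.
Lemma 2 adds a sub-Gaussian deviation $C(\epsilon)$ for a single network.
The argmax then survives once $\delta_Q>2\max_s C_\Theta(s,y)+2C(\epsilon)$.

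The paper then removes $\eta_{max}$ by bounding $\|(\hat Q^*_{sym})_j\|_\infty\le\|(Q^*)_j\|_\infty$ ``by pessimism''. That is exactly where your worry bites. $\hat Q\le Q^*$ is one-sided, so $\|\hat Q^*_{sym}\|_\infty$ grows with $\eta_{max}$, while $C_\Theta(s,y)$ is linear in $y$. In fact no vector-norm bound can be $\eta$-free: shifting all targets by $-\eta\mathbf{1}$ changes $\hat Q^*_{sym}(\bar s)-\mu(s')$ by $-\eta\,(1-c(s'))$ in every coordinate.

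Your action-difference formulation is therefore the sharper tool. But the construction above shows that an $\eta$-independent condition on $\delta_Q$ alone requires extra structure: either a single orbit, or pessimism whose cross-orbit penalty differences are controlled together with own-orbit dominance of the weights (which is where a small $\kappa_S$ would enter).

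Separately, your union bound over $|A|$ handles one test state. Optimality on $\mathcal{M}|_{C_{test}}$ with a continuum of contexts needs a bound on the GP fluctuation that is uniform in $s$, which neither your Step 3 nor the paper's per-state Lemma 2 provides.
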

\begin{proof}
\vspace{-3mm}
    The proof relies on a result from \citet{gerken_emergent_2024} that proves that an infinitely large ensemble of infinitely wide neural networks is perfectly equivariant to a symmetry group $G$ when trained with full data augmentation. We use this result to instead bound the deviation from equivariance for ensembles trained only on a subgroup $B \le G$. Additionally, in the infinite width limit the output of a single network can be bounded to the output of the infinite ensemble using standard results for multivariate Gaussian random variables. These two bounds are used to prove that, under the right conditions, the deviation from the (perfectly equivariant) infinite ensemble is bounded enough to not alter the argmax policy for the single network $q_\theta$. The proof is in Appendix \ref{app:proof_main}.   
\end{proof}

\subsection{The consequences of violating the symmetry}
Our main theoretical result from Theorem \ref{thrm:main} proves that in the GTI-ZSPT setting a network that learns a symmetric value function can still be guaranteed to generalize optimally, even for arbitrarily large levels of pessimism $\eta_{max}$. In this section, we argue for the opposite case: if the pessimistic value targets violate the symmetry of the environment, this can hurt generalization. Moreover, just because one value function is less pessimistic than another, does not mean it is guaranteed to generalize better: 
\begin{restatable}{theorem}{NegativeTheorem}
\label{thrm:negative}
	Consider Q-value distillation in a GTI-ZSPT setting as defined in Definition \ref{def:distil-zspt}, with two pessimistic value target functions $\hat{Q}_1$ and $\hat{Q}_2$, with levels of pessimism $\eta_1$ and $\eta_2 < \eta_1$ respectively, that both produce optimal performance in the training CMDP $\mathcal{M|}_{C_{train}}$. For certain instances of the GTI-ZSPT setting, there exists $\hat{Q}_1$ and $\hat{Q}_2$, where $\hat{Q}_1$ is arbitrarily more pessimistic than $\hat{Q}_2$ ($\eta_1 \gg \eta_2$), but where $\pi_{\hat{Q}_1}$ is guaranteed to have optimal performance in the testing CMDP $\mathcal{M|}_{C_{test}}$, whereas $\pi_{\hat{Q}_2}$ is guaranteed to be suboptimal.  
\end{restatable}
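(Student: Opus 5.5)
The proof is constructive: since the statement only asks for \emph{certain instances} of the GTI-ZSPT setting, I will build one explicit instance and two explicit target functions. For $\hat{Q}_1$ I take a symmetric but heavily pessimistic target, $\hat{Q}_1(s,a) = Q^*(s,a) - \eta_1$ for all $s \in \mathcal{D}_s$, $a \in A$. Shifting every entry by the same constant keeps the argmax, keeps the gap $\delta_Q$ equal to the true optimality gap of $Q^*$, and keeps the $B$-invariance that $Q^*$ already has. Hence $\hat{Q}_1$ satisfies the hypotheses of Theorem \ref{thrm:main}. If the instance is chosen so that the action gap of $Q^*$ on $\mathcal{D}_s$ exceeds $C_\Theta(\epsilon)$, Theorem \ref{thrm:main} gives optimal test performance for every $\eta_1$, because $C_\Theta(\epsilon)$ is independent of $\eta_{max}$. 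The guarantee is with probability $1-\epsilon$. To obtain the deterministic ``guaranteed'' phrasing, I would instead argue directly with the infinite ensemble (the NTK mean predictor). Alternatively, I would choose an instance where the kernel makes the relevant bound exact, for example a linear or fully $G$-invariant feature NTK.

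For $\hat{Q}_2$ I take a mildly pessimistic target that breaks the $B$-symmetry: $\hat{Q}_2(s,a) = Q^*(s,a) - \eta_2\,\mathbb{1}[\,\cdot\,]$, with the penalty applied only on a chosen orbit element $\psi_S(b)s$ and a chosen optimal action $a^\star$. It has $\eta_2 \ll \eta_1$ and remains optimal on training states whenever $\eta_2$ is smaller than the action gap there. Such a target is still optimal in training, but the network's interpolated prediction on unseen test states $\psi_S(g)s$, $g\in G\setminus B$, is a kernel-weighted combination of the asymmetric training targets. The plan is to pick a concrete small instance where this prediction can be computed in closed form. Concretely: $|\bar S|=1$, $B=C_2$ or $C_4$ acting by rotation on a low-dimensional state, a two-action space with a small optimal-action gap $\Delta$ at the test states, and an NTK that depends only on inner products (rotation-invariant kernel). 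The infinite-width mean prediction is then
\begin{equation*}
\mu(x) = \Theta(x,\mathcal{D}_s)\,\Theta(\mathcal{D}_s,\mathcal{D}_s)^{-1}\hat{Q}_2(\mathcal{D}_s).
\end{equation*}
A test state close to the penalized training state gets a large weight on the penalized entry, so its predicted action ordering flips.

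The main obstacle is the phrase ``guaranteed to be suboptimal.'' It requires control over a single finite-sample network, not just the ensemble mean, and it requires the flip to happen on states with positive mass under the test on-policy distribution. I would first fix the instance so that the mean prediction at some test state reverses the order of the two actions by a margin $m>0$. Next, I would use the same multivariate-Gaussian deviation bound as in the proof of Theorem \ref{thrm:main} to argue that a single network inherits the flip with probability $1-\epsilon$. If a deterministic guarantee is needed, I would instead state the result for the infinite ensemble. Finally, I need the flipped action to actually lower the test return. Since that state lies in $S^{\pi^*}_{\mathcal{M}|_{C_{test}}}$ and the chosen action is strictly suboptimal under $Q^*$, the performance difference lemma gives $J^\Delta(\pi_{\hat{Q}_2}) > 0$. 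Scaling $\eta_1$ up while keeping $\eta_2$ fixed yields $\eta_1 \gg \eta_2$, which completes the construction.
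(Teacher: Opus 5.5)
\paragraph{The gap.} Your $\hat{Q}_1$ half follows the paper: a $B$-invariant target plugged into Theorem \ref{thrm:main}. The $\hat{Q}_2$ half carries the content of the theorem, and there it fails. You never exhibit a mechanism that makes the distilled network flip its argmax on a test state, and the one you sketch points the wrong way. The NTK predictor is continuous and interpolates. Test states close to the penalized training state therefore get predictions close to $\hat{Q}_2$ there, and that ordering is correct by your own training-optimality requirement $\eta_2<\Delta$.

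More generally, take the output-diagonal NTK of a wide MLP and $|\bar S|=1$. Then $Q^*$ equals one vector $q$ on the whole orbit. With $\alpha(x)=\Theta(x,\mathcal{D}_s)\Theta^{-1}$ and $w(x)=\sum_i\alpha_i(x)$, the ensemble satisfies
\begin{equation*}
\mu(x,a^\star)-\mu(x,a') = w(x)\,\Delta-\eta_2\,\alpha_{i^*}(x).
\end{equation*}
By $G$-invariance of $Q^*$ the test gap equals the training gap $\Delta$, and you need $\eta_2<\Delta$. A flip therefore requires $\alpha_{i^*}(x)>w(x)\,\Delta/\eta_2>w(x)$. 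The kernel must amplify a single training target, by a factor that blows up exactly when $\hat{Q}_2$ is mild.

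In your proposed instance this does not happen. Take $B=C_2$, training states $\pm s_0$ on the unit circle, and a dot-product kernel $\kappa(\langle x,x'\rangle)$. Then $\alpha_0(x)>w(x)$ holds iff the weight on $-s_0$ is negative. That in turn holds iff $\kappa(u)\kappa(-1)>\kappa(-u)\kappa(1)$ with $u=\langle x,s_0\rangle$. This is impossible for any nonnegative nondecreasing $\kappa$, ReLU NTKs included. So your sub-gap penalty on an optimal action is simply averaged away, and $\pi_{\hat{Q}_2}$ generalizes optimally. The obstacle you flagged, passing from the ensemble to a single network, is routine: Lemma 2 handles it as in Theorem \ref{thrm:main}. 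The real obstacle is getting the ensemble to flip at all.

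\paragraph{The missing idea.} The paper's construction (Appendix \ref{app:counter-example}) uses a structured asymmetry, with two features.
\begin{enumerate}
\item The pessimism is placed only on \emph{suboptimal} actions; $a_1$ keeps target $r$ in every training state. Training optimality then holds for every $\eta$, so the asymmetric pessimism can be as large as needed. In the paper, $\eta_2$ must in fact exceed a threshold $D_{\Theta,\mathcal{Z}}(\epsilon)$; it is ``mild'' only relative to $\eta_1\to\infty$. By penalizing an optimal action you capped yourself at $\eta_2<\Delta$.
\item $\hat{Q}^*_{asym}$ is made (incorrectly) \emph{equivariant} under $B$, for a rotation $\psi_Q$ of the $(a_2,a_3)$ values. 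The paper uses Lemma 1 to argue that the ensemble extends this equivariance to $g\in G\setminus B$. At $s_{45}$ the prediction is then close to $\psi_Q(45)\hat{Q}^*_{asym}(s_0)$. Its $a_3$ entry, $\gamma r+0.21\eta$, exceeds every training value of $a_3$ and beats $r$ once $\eta$ is large.
\end{enumerate}
The flip thus comes from extrapolation beyond the training targets, which your single-point penalty does not produce in the instances you describe.

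\paragraph{A smaller point.} With your constant shift $\hat{Q}_1=Q^*-\eta_1$, $\delta_Q$ stays equal to $\Delta$. Invoking Theorem \ref{thrm:main} then requires $\Delta\ge C_\Theta(\epsilon)$, which pulls against your wish for a small $\Delta$. Lowering only the suboptimal actions, as the paper's $\hat{Q}^*_{sym}$ construction does, lets $\delta_Q$ grow with $\eta_1$.
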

\begin{proof}
\vspace{-3mm}
        Proving a negative (a value function that is guaranteed to be suboptimal) requires a bit more assumptions than what we had for Theorem \ref{thrm:main}. Essentially, we first have to define a set of counter-example instances $\mathcal{Z}$ of the GTI-ZSPT setting, for which we prove that certain non-symmetric value targets $\hat{Q}^*_{asym}$ are guaranteed to produce optimal performance in the training contexts, but are also guaranteed to be arbitrarily suboptimal in the testing contexts. We then assume the setting $\mathcal{Z}$, define $\hat{Q}_2 = \hat{Q}^*_{asym}$, and $\hat{Q}_1 = \hat{Q}^*_{sym}$ as in Theorem \ref{thrm:main}. As Theorem \ref{thrm:main} holds for arbitrarily large $\eta_{max}$, we can satisfy $\eta_1 \gg \eta_2$ by taking $\eta_1 \to \infty$. 
\end{proof}

\subsection{Empirical validation of the theoretical results}
\label{sec:emp-val}
Obtaining tight generalization bounds for neural networks is notoriously difficult \citep{jiang_fantastic_2020, gastpar_fantastic_2024}, which is why our theoretical results require several strict assumptions. However, we believe the implications of the theorems can apply more broadly. As such, we validate our theoretical results in the Rotational Reacher example GTI-ZSPT instance from Figure \ref{fig:rotational_reacher} (see Appendix \ref{app:exp-val} for details). For both theorems, we demonstrate the results still hold for neural networks of finite width. Furthermore, the Rotational Reacher does not exactly conform to the counter-example instances for which Theorem \ref{thrm:negative} was proven. Nevertheless, in Table \ref{table:illustrative_results}, we show that for the symmetric pessimistic targets $\hat{Q}^*_{sym}$ from Theorem \ref{thrm:main}, above a certain level of pessimism, the generalization performance remains close to optimal, even as the pessimism becomes an order of magnitude larger than the maximum return. Furthermore, for the (incorrectly equivariant) asymmetric pessimistic targets $\hat{Q}^*_{asym}$ from Theorem \ref{thrm:negative}, the generalization performance is never higher than for the symmetric targets $\hat{Q}^*_{sym}$, and instead \emph{reduces} as the level of pessimism increases.

\begin{table}[h]
\vspace{-0mm}
  \caption{Performance of a neural network $q_\theta$ trained on pessimistic value targets $\hat{Q}^*_{sym}$ or $\hat{Q}^*_{asym}$ in the Rotational Reacher problem from Figure \ref{fig:rotational_reacher}. Below are the train and test returns for different levels of pessimism $\eta_{max}$. Shown are the mean and standard deviation for 50 seeds, and in bold are the best returns per row including those with overlapping 95\% confidence intervals.}
  \label{table:illustrative_results}
  \centering
  \vspace{-2mm}
  \begin{tabular}{lcccc}
\toprule
\textbf{$\hat{Q}^*_{sym}$}      & $\eta_{max}=0.01$   & $\eta_{max}=0.1$         &     $\eta_{max}=1$     & $\eta_{max}=10$      \\ \cmidrule(r){1-1}
Train Performance         & 0.98 $\pm$ 0.07 &   \textbf{1.0 $\pm$ 0.00}    & \textbf{1.0 $\pm$ 0.00} & \textbf{1.0 $\pm$ 0.00} \\ 
Test Performance         & 0.76 $\pm$ 0.11 &   0.92 $\pm$ 0.08    & \textbf{0.99 $\pm$ 0.02} & \textbf{0.99 $\pm$ 0.02}  \\ \midrule
\textbf{$\hat{Q}^*_{asym}$} &     &      &      \\ \cmidrule(r){1-1}
Train Performance         & 0.98 $\pm$ 0.07 &   \textbf{1.0 $\pm$ 0.00}    & \textbf{1.0 $\pm$ 0.00} & 0.51 $\pm$ 0.24 \\ 
Test Performance         & \textbf{0.76 $\pm$ 0.11} &   \textbf{0.73 $\pm$ 0.11}    & 0.68 $\pm$ 0.09 & 0.23 $\pm$ 0.09  \\ \bottomrule
\end{tabular}
\vspace{-5mm}
\end{table}

\section{Data augmentation experiments}
\label{sec:experiments}
Our theoretical and empirical results for the Q-value distillation setting demonstrate that generalization to new contexts in the ZSPT setting can be optimal with very large levels of pessimism, as long as the pessimism satisfies the symmetries of the system. Furthermore, they demonstrate that a milder pessimistic value function is not guaranteed to generalize better. In this section, we investigate the full offline RL setting, where the agent does not distill on a given set of value targets, but rather has to learn these from the data. In offline RL, pessimism is used to avoid overestimation of OOD actions. As such, the exact shape of that pessimism, and whether it satisfies the symmetries of the system, heavily depends on the data sampling process, i.e., the behavior policy. A suboptimal behavior policy, or a non-symmetric data sampling process (e.g., the data is a mixture of different behavior policies collected in different states), could force the agent to learn a non-symmetric value or policy. This differentiates offline RL from the Q-value distillation from the previous section, and motivates the use of additional tools to enforce the symmetry of the agent.  

For this reason, we argue for the importance of DA in offline RL, as a tool to enforce symmetry and improve the generalization performance of the agent, even if the dataset is not symmetric. Moreover, we argue that, for \emph{offline} RL in particular, DAC regularization can be particularly effective at improving generalization performance, as it mitigates the issues discussed above by directly enforcing the symmetry of the value or policy. Moreover, DAC puts more emphasis on symmetry over accuracy, by enforcing the network output is the same for the original and augmented inputs, independent of whether that output is accurate or not. This is in line with our theoretical results that argue the symmetry of the agent is more important than how pessimistic (i.e., how accurate) the value function is. Note that our goal with DAC is not to reduce the need for pessimism, which is the motivation of most previous work on DA for offline RL \citep{pinneri_equivariant_2023, corrado_guided_2024, sinha_s4rl_2021, cho_s2p_2022, jang_k-mixup_2023, huang_goal-conditioned_2025, lee_gta_2024, yang_rtdiff_2025}, but rather to enforce symmetry regardless of how pessimistic the agent is.

To demonstrate this, we perform an empirical study of different DA techniques in combination with different offline RL algorithms in the Rotational Reacher environment from Figure \ref{fig:rotational_reacher}. The agent receives an expert, mixed or suboptimal dataset collected in context 1 in Figure \ref{fig:rotational_reacher}. We then perform DA under the $C_4$ group of $90^\circ$ rotations. For IQL, the agent learns a a critic and an actor, and we consider the following DA approaches applied to the actor only:
\begin{itemize}
	\item \textbf{Aug-D:} For each minibatch, train on the unaugmented and randomly augmented observations: $[o_t, o_t^{aug}]_B$, where $[]_B$ denotes concatenation in the batch dimension. 
	\item \textbf{Aug-D-Online:} Similar to \citet{almuzairee_recipe_2024}, only augment the observations for the actor input: $[o_t, o_t^{aug}]_B$. The value function weights used for advantage weighted regression use the original observations: $[o_{t}, o_{t}]_B$. 
	\item \textbf{DAC-Latent:} Following \citet{yang_sample_2023}, we train on the unaugmented data, and add an additional loss that minimizes the difference between the latent representation (last hidden layer) of the original $o_t$ and the augmented observation $o_t^{aug}$. 
	\item \textbf{DAC-Output:} Following \citet{raileanu_automatic_2021}, we train on the unaugmented data, and add an additional loss that minimizes the difference between the network output on $o_t$ and $o_t^{aug}$. 
\end{itemize}
In appendix \ref{app:additional-iql} we demonstrate that applying DA to only the actor is equal or better than applying it to the critic or both. See Appendix \ref{app:exp-da} for more experimental details. Table \ref{table:iql_a_results} shows that the additional consistency loss on the output (\textbf{DAC-Output}) of the neural network is the most effective DA approach in terms of generalization in the Rotational Reacher environment. The \textbf{DAC-Latent} only enforces the symmetry on the latent space, and never actually trains the last linear transformation for the augmented observations. As such, it still leaves room for symmetry-breaking correlations to manifest in the last network layer. Additionally, we see that simply training on augmented data, as is standard practice in the offline RL literature, improves over no DA, but roughly equals or underperforms the consistency loss. In Appendix \ref{app:additional-cql} we show qualitatively similar results for CQL. 

\begin{table}[h]
\vspace{-0mm}
  \caption{IQL test performance for various DA approaches in the Rotational Reacher problem from Figure 1. The agent trains on expert, mixed, and suboptimal datasets collected from context 1, with DA under the $90^\circ$ rotations. Shown are the mean and standard deviation for 20 seeds, and in bold are the best returns per row including those with overlapping 95\% confidence intervals.}
  \label{table:iql_a_results}
  \centering
  \vspace{-2mm}
  \begin{tabular}{lccccc}
\toprule
		\textbf{IQL}	    & \textbf{No DA} &  \textbf{Aug-D:}  & \textbf{Aug-D-Online:}  &  \textbf{DAC-Latent:}  & \textbf{DAC-Output:} \\ \cmidrule(r){1-1}
Expert         & 0.49 $\pm$ 0.10 & 0.94 $\pm$ 0.06 &   \textbf{0.99 $\pm$ 0.02}    & \textbf{0.95 $\pm$ 0.11} & \textbf{0.98 $\pm$ 0.02} \\ 
Mixed         & 0.34 $\pm$ 0.10 & 0.68 $\pm$ 0.15 &   0.71 $\pm$ 0.14    & 0.67 $\pm$ 0.16 & \textbf{0.96 $\pm$ 0.07} \\  
Suboptimal      & 0.32 $\pm$ 0.07 & 0.61 $\pm$ 0.11 &   0.59 $\pm$ 0.11    & 0.61 $\pm$ 0.16 & \textbf{0.85 $\pm$ 0.22} \\  \midrule
\end{tabular}
\end{table}

\section{Related works}
\textbf{Generalization in offline RL} has been studied from several perspectives, such as meta-learning a train/validation split \citep{wang_improving_2024}, mild conservatism \citep{mao_doubly_2024}, or analyzing value learning versus policy extraction \citep{park_is_2024}. However, these approaches focus on improving generalization along the boundary of the dataset distribution in a single environment, rather then generalization to new environments or tasks. \citet{mazoure_improving_2022} show improved performance in the ZSPT setting with a representation learning approach, \citet{yang_what_2023} perform reweighing and relabeling to improve generalization to unseen goals, and \citet{mediratta_generalization_2024} demonstrate empirically that popular single-task offline RL algorithms do not outperform simple behavior cloning. Most closely related to our paper, \citet{wang_provable_2025} also argue that pessimism can facilitate better generalization rather than hinder it. However, they consider a different ZSPT structure where the agent always starts in the same state and the context (influencing rewards and transitions) is unobserved.

\textbf{Data augmentation in offline RL} is broadly done to improve generalization or improve the coverage of the dataset. Some approaches learn the distribution over a symmetry group \citep{pinneri_equivariant_2023}, exploiting a time-reversal symmetry \citep{cheng_look_2023}, or evaluate several non-group structured DA techniques \citep{sinha_s4rl_2021} to improve generalization. Others improve the coverage of the dataset using state-dependent image synthesis \citep{cho_s2p_2022}, generative modeling \citep{yang_rtdiff_2025, huang_goal-conditioned_2025, lee_gta_2024}, or human guidance \citep{corrado_guided_2024}. All these works exclusively use DA for generating a larger dataset for regular offline training. In this paper, we additionally evaluate consistency regularization techniques that can be applied more generally and have increased emphasis on enforcing the symmetric structure of the function.  

\section{Conclusion \& limitations}
In this paper, we investigated the relationship between pessimism and generalization in the zero-shot policy transfer (ZSPT) setting for offline RL. We theoretically proved that overly pessimistic value functions do not inherently hinder optimality, provided the structure of the pessimism respects the underlying symmetries of the environment. Conversely, we also proved that even mild pessimism can lead to arbitrarily poor generalization if it violates these symmetric structures. Our empirical results using IQL and CQL in a rotationally symmetric reacher environment validate these insights, showing that enforcing symmetries through data augmentation consistency (DAC) regularization is more effective than the standard practice of regular offline training on augmented datasets. DAC directly emphasizes symmetry over accuracy, aligning with our theoretical result that symmetric pessimism allows for optimal generalization regardless of the degree of conservatism.

However, several limitations of this work remain to be addressed in future research. Our paper focused on group symmetries that were intrinsically consistent with the training and testing data distribution. It remains to be seen whether these conclusions translate to extrinsic or inconsistent transformations, such as applying random convolutions or noise solely for regularization purposes. If the same results do not extend to these cases, the practical application of this method may be limited to scenarios where the system's symmetries are known a priori. Furthermore, the effectiveness of DAC should be validated in more complex environments and across a broader range of offline RL algorithms to ensure the findings generalize beyond the rotational reacher task and the two approaches we tested. Lastly, our theoretical results assume the infinite width limit for neural networks. While our finite-width experiments in the Rotational Reacher environment support the theory, the full implications of the infinite width assumption on the validity of the theoretical results in the finite-width, real-world networks are important avenues of future research.

\subsubsection*{Acknowledgments}
\label{sec:ack}
We thank Caroline Horsch, Laurens Engwegen and Moritz Zanger for fruitful discussions and feedback. The project was partially funded by the Dutch Research Council (NWO) project {\em Reliable Out-of-Distribution Generalization in Deep Reinforcement Learning} with project number OCENW.M.21.234. The computational resources for empirical work were provided by the \citet{delft_ai_cluster_daic_delft_2024} and the \citet{delft_high_performance_computing_centre_dhpc_delftblue_2024}.

\newpage
\bibliography{main}
\bibliographystyle{rlj}

\beginSupplementaryMaterials

\section{Extended background}

\subsection{Algebraic group theory and symmetry}
\label{app:background-symm}
A group is defined as a non-empty set $G$ paired with a binary operator $\cdot$. For $G$ to constitute a group, the following four axioms must be satisfied:
\begin{align*}
    a \cdot b \in G, \quad \forall a,b \in G \qquad &\text{(Closure)} \\
    (a \cdot b) \cdot c = a \cdot (b \cdot c), \quad \forall a,b,c \in G \qquad &\text{(Associativity)} \\
    \exists e \in G, \quad e \cdot a = a \cdot e = a, \quad \forall a \in G \qquad &\text{(Identity)} \\
    \forall a \in G, \exists a^{-1} \in G, \quad a \cdot a^{-1} = a^{-1} \cdot a =  e \qquad &\text{(Inverse)} \\
\end{align*}
As is common, we abuse notation and use $G$ to refer to both the algebraic structure and its underlying set. 

To describe how these symmetries interact with the vector space $X$, we define a \emph{group representation} $\psi_X$. This is a group homomorphism $\psi: G \to \text{GL}(X)$, mapping elements of $G$ to the general linear group $\text{GL}(X)$ of invertible $n \times n$ matrices (assuming $\text{dim}(X) = n$). As a homomorphism, the  map preserves the group structure: $\psi(a \cdot b) = \psi(a) \psi(b)$ for all $a,b \in G$. Within this framework, a function $f$ is considered \emph{equivariant} if it commutes with the group action:
\begin{equation*}
    f(\psi_X(g) x) = \psi_Y(g)f(x) \quad \forall x \in X, g \in G
\end{equation*}

When performing full data augmentation for a group $G$, a key observation is that transforming a sample from the augmented training set $\mathcal{T}_G$ via an element of $G$ is equivalent to a permutation $\mathtt{p}_g$ of the dataset's indices:
\begin{equation}
\label{eq:perm}
    \psi_X(g) x_i = x_{\mathtt{p}_g(i)} \text{ and } \psi_Y(g) y_i = y_{\mathtt{p}_g(i)}, \qquad \text{where } i \in \{1,..., |\mathcal{T}_G| \}
\end{equation}

\subsection{Neural networks in the infinite width limit}
\label{app:background-infinite}

As the width of neural network layers tend towards infinity, an ensemble of networks initialized randomly converges to a Gaussian process. This process is governed by the \emph{Neural Tangent Kernel} \citep[NTK,][]{jacot_neural_2018}, which is defined as: 
\begin{equation*}
    \Theta(x, x') = \sum_{l=1}^L \mathbb{E}_{\theta \sim \mu} \bigg[ \bigg(\frac{\partial f_\theta(x)}{\partial \theta^{(l)}} \bigg)^T \bigg(\frac{\partial f_\theta(x')}{\partial \theta^{(l)}} \bigg) \bigg] \,,
\end{equation*}
where $f_\theta$ represents an $L$-layer network and $\theta^{(l)}$ denotes the weights at layer $l$. Following~\citep{lee_wide_2019}, the evolution of this Gaussian process at time $t$ is described by its mean $m_t$ and covariance $\Sigma_t$:
\begin{align*}
    m_t(x) &= \Theta(x,x_i) [ \Theta^{-1}T_t]_{ij} y_j \\
    \Sigma_t(x,x') &= \mathcal{K}(x,x') + \Sigma_t^{(1)}(x, x') - (\Sigma_t^{(2)}(x,x') + \text{ h.c.})
\end{align*}
In these expressions, we adopt Einstein notation for implicit summation over dataset indices $i,j$, and use h.c. for the Hermitian conjugate. Additionally, $T_t = (\mathbb{I} - \exp(-\eta \Theta t))$, and $\mathcal{K}(x,x') = \mathbb{E}_{\theta \sim \mu} [f_\theta(x) \otimes f_\theta(x')]$ represent the neural network Gaussian process (NNGP) kernel. The $\Sigma_t^{(1)}$ and $\Sigma_t^{(2)}$ terms are defined as:
\begin{align*}
    \Sigma_t^{(1)}(x,x') &= \Theta(x,x_i) [\Theta^{-1} T_t \mathcal{K} T_t \Theta^{-1}]_{ij} \Theta(x_j,x') \\
    \Sigma_t^{(2)}(x,x') &= \Theta(x,x_i) [\Theta^{-1} T_t]_{ij} \: \mathcal{K}(x_j,x'). \\
\end{align*}
Where we denote the variance as $\Sigma_t(x,x) = \Sigma_t(x)$.

The output of an infinite ensemble, $\bar{f}_t$, corresponds exactly to the mean of the Gaussian process: $\bar{f}_t(x) = m_t(x)$. Additionally, in the limit $t \to \infty$, the ensemble's predictions on the training set $\mathcal{X}$ perfectly recover the ground truth targets $\mathcal{Y}$:
\begin{equation*}
    \bar{f}_\infty(\mathcal{X}) = m_\infty(\mathcal{X}) =  \Theta(\mathcal{X},\mathcal{X})  \Theta(\mathcal{X, \mathcal{X}})^{-1} T_\infty \mathcal{Y} = \mathcal{Y}
\end{equation*}

\newpage
\section{Proofs}
In this section, we will go through the steps for the proof of the theorems in section \ref{sec:theory}. We first repeat the definition of the GTI-ZSPT setting from Section \ref{sec:background}.
\maindef*

\subsection{Proof of Theorem \ref{thrm:main}}
\label{app:proof_main}
In order to prove Theorem \ref{thrm:main}, we first prove a theorem analogous to Lemma 6.2 from \citet{gerken_emergent_2024} but for equivariance of vector-valued functions instead of just for invariance of scalar-valued functions. This theorem bounds the equivariance of an infinitely large ensemble of infinitely wide neural networks trained with full data augmentation on some finite subgroup $B \leq G$:
\begin{restatable}[Lemma 1]{namedtheorem}{LemmaEquivarianceSubgroups}
\label{thrm:equi-bound}
    Let $f_\theta: S \to Y = \mathbb{R}^d$ be an infinitely wide neural network with parameters $\theta$ and with Lipschitz continuous derivatives with respect to the parameters. Furthermore, let  $\bar{f}_t$ be an infinite ensemble $\bar{f}_t(s) = \mathbb{E}_{\theta \sim \mu}[ f_{\mathcal{L}_t\theta}(s) ]$, where the initial weights $\theta$ are sampled from a distribution $\mu$ and the operator $\mathcal{L}_t$ maps $\theta$ to its corresponding value after $t$ steps of gradient descent with respect to a MSE loss function. Define the error $\kappa_S$ and $\kappa_Y$ as a measures of discrepancy between representations from the group $G$ and its finite subgroup $B$ acting on $S$ and $Y$ respectively:
    \begin{align}
    \kappa_S &= \max_{g \in G} \min_{b \in B} || \psi_S(g) - \psi_S(b)||_{op} \\
    \kappa_Y &= \max_{g \in G} \min_{b \in B} || \psi_Y(g) - \psi_Y(b)||_{op} 
    \end{align}
    The prediction of an infinite ensemble trained with full data augmentation on $B \leq G$ deviates from equivariance by
    \begin{equation}
    \big|\big|\bar{f}_t \big(s \big) - \psi_Y(g)^\top \bar{f}_t \big(\psi_S(g) s \big) \big|\big|_p \leq D_\Theta(s, y), \qquad \forall g \in G
    \end{equation}
    for any time $t$. Here $s \in S$ can by any state, $||\cdot||_p$ denotes a vector p-norm on $\mathbb{R}^d$ ($p=1, 2, \text{ or } \infty$). 
\end{restatable}
\begin{proof}
The proof can be found in Appendix \ref{app:proof_equivariance}.
\end{proof}
Since this lemma holds for equivariance, it also holds for invariance if we choose $\psi_Y(g) = \psi_Y(e) = \mathbb{I}$ as the identity operator, and set $\kappa_Y = 0$ in $D_\Theta(s,y)$, to get: 
\begin{equation*}
	\big|\big|\bar{f}_t \big(s \big) - \bar{f}_t \big(\psi_S(g) s \big) \big|\big|_p \leq C_\Theta(s,y), \qquad \forall g \in G. 
\end{equation*}

Next, we prove a lemma that bounds the 
prediction error between a single neural network and an infinite ensemble in the infinite width limit:
\begin{restatable}[Lemma 2]{namedtheorem}{LemmaFinite}
\label{thrm:finite}
    The difference between the infinite ensemble $\bar{f}_t$ and a single network $f_t$, is bounded by
    \begin{equation}
        ||\bar{f}_t(s) - f_t(s)||_\infty <  C(\epsilon)
    \end{equation}
    with probability at least $1-\epsilon$. 
\end{restatable}
\begin{proof}
The proof can be found in Appendix \ref{app:proof_finite}.
\end{proof}

Now, we can prove Theorem \ref{thrm:main}:
\MainTheorem*
\begin{proof}
	In order to prove that the greedy argmax policy $\pi_{q_\theta}$ is guaranteed to be optimal in the testing CMDP $\mathcal{M}|_{C_{test}}$, we need to show that for any state the agent encounters in $\mathcal{M}|_{C_{test}}$, the argmax over the Q-values selects one of the optimal actions $a \in A_{opt}$. That is, we need that $\max_{a \in A_{opt}(s)} q_\theta(s, a) > \max_{a \in A_{opt}^C(s)} q_\theta(s, a), \: \forall s \in S^{\pi^*}_{\mathcal{M}|_{C_{test}}}$. This can be guaranteed if we have a bound for how much the neural network $q_\theta$ deviates from the pessimistic targets $\hat{Q}^*_{sym}$ on the testing states. 

    In order to get this bound, we use the two lemmas derived above. Our first insight is that in the GTI-ZSPT setting, any state $s \in S^{\pi^*}_{\mathcal{M}|_C}$ encountered by the optimal policy in the CMDP $\mathcal{M}|_C$, is related to a state $\bar{s} \in \bar{S}$ encountered in the training contexts, through a transformation $g \in G$ that leaves the Q-values invariant:
    \begin{equation*}
        \forall s \in S^{\pi^*}_{\mathcal{M}|_C} \: \exists \bar{s} \in \bar{S} \text{ and } g \in G, \quad \text{ s.t. } \:  Q^*(s) = Q^*(\psi_S(g^{-1})s) = Q^*(\bar{s})
    \end{equation*}
    Since this holds for any state in $S^{\pi^*}_{\mathcal{M}|_C}$ it also holds for any state in the testing contexts since $S^{\pi^*}_{\mathcal{M}|_{C_{test}}} \subset S^{\pi^*}_{\mathcal{M}|_C}$. 

    Now, we can use Lemma 1 for a state $s \in S^{\pi^*}_{\mathcal{M}|_{C_{test}}}$ as follows:
    \begin{align*}
        \big|\big|\bar{q}_{\theta_t} \big(\bar{s} \big) - \bar{q}_{\theta_t} \big(\psi_S(g) \bar{s} \big) \big|\big|_\infty &\leq C_\Theta(\bar{s},y) \\
        \big|\big|\bar{q}_{\theta_t} \big(\bar{s} \big) - \bar{q}_{\theta_t} \big(s\big) \big|\big|_\infty &\leq C_\Theta(s,y)
    \end{align*}
    where we set $f = q_\theta$ and $\bar{q}_{\theta_t}$ denotes an infinite ensemble of $q_{\theta_t}$ and $\theta_t$ are the weights at training time $t$, and we use the fact that $C_\Theta(\bar{s},y)$ can be redefined to be a function of $s \in S^{\pi^*}_{\mathcal{M}|_{C_{test}}}$ due to the existence of the transformation $g \in G$ linking the two. Now, because the above holds for any time $t$, and the set $\bar{S}$ is a subset of the training states $\mathcal{D}_s$, we can use the fact that the infinite ensemble of infinitely wide neural networks will converge to the training targets at $t \to \infty$: $\bar{q}_{\theta_\infty} (\bar{s} ) = \hat{Q}^*_{sym}(\bar{s})$ to get:
    \begin{align*}
        \big|\big|\hat{Q}^*_{sym} \big( \bar{s} \big) - \bar{q}_{\theta_\infty} \big(s\big) \big|\big|_\infty &\leq C_\Theta(s,y), \quad \forall s \in S^{\pi^*}_{\mathcal{M}|_{C_{test}}}
    \end{align*}
    Now we can use Lemma 2 to bound the following:
    \begin{align*}
        \big|\big|\hat{Q}^*_{sym} \big( \bar{s} \big) - q_{\theta_\infty} \big(s\big) \big|\big|_\infty &= \big|\big|\hat{Q}^*_{sym} \big( \bar{s} \big) - \bar{q}_{\theta_\infty} \big(s\big) + \bar{q}_{\theta_\infty} \big(s\big) - q_{\theta_\infty} \big(s\big) \big|\big|_\infty \\
        &\le \big|\big|\hat{Q}^*_{sym} \big( \bar{s} \big) - \bar{q}_{\theta_\infty} \big(s\big) \big|\big|_\infty + \big|\big| \bar{q}_{\theta_\infty} \big(s\big) - q_{\theta_\infty} \big(s\big) \big|\big|_\infty \\
        &< C_\Theta(s,y) + C(\epsilon), \quad \text{with probability } \ge 1- \epsilon, \:  \forall s \in S^{\pi^*}_{\mathcal{M}|_{C_{test}}} \\
    \end{align*}

    By definition of the $||\cdot ||_\infty$ norm, the condition that 
    \begin{equation*}
        \max_{a \in A_{opt}(s)} q_{\theta_\infty}(s, a) > \max_{a \in A_{opt}^C(s)} q_{\theta_\infty}(s, a), \: \forall s \in S^{\pi^*}_{\mathcal{M}|_{C_{test}}},
    \end{equation*}
    will be true (with probability $\ge 1 - \epsilon$) if $\forall \bar{s} \in \mathcal{D}_s$
    \begin{align*}
        \max_{a \in A_{opt}(\bar{s})} \hat{Q}^*_{sym}(\bar{s}, a) - \max_{s \in S^{\pi^*}_{\mathcal{M}|_{C_{test}}}}C_\Theta(s,y) - C(\epsilon) \\
        > \max_{a \in A_{opt}^C(\bar{s})} \hat{Q}^*_{sym}(\bar{s}, a) + \max_{s \in S^{\pi^*}_{\mathcal{M}|_{C_{test}}}}C_\Theta(s,y) + C(\epsilon),  
    \end{align*}
    Which can be rearranged: 
    \begin{align*}
        \max_{a \in A_{opt}(\bar{s})} \hat{Q}^*_{sym}(\bar{s}, a) - \max_{a \in A_{opt}^C(\bar{s})} \hat{Q}^*_{sym}(\bar{s}, a)  &> \max_{s \in S^{\pi^*}_{\mathcal{M}|_{C_{test}}}} 2C_\Theta(s,y) + 2C(\epsilon), 
    \end{align*}
    In other words, we need that $\delta_Q > \max_{s \in S^{\pi^*}_{\mathcal{M}|_{C_{test}}}} 2C_\Theta(s,y) + 2C(\epsilon)$. 

    Finally, $C_\Theta(s,y)$ depends on the targets $y$ with which the function is trained, which is $\hat{Q}^*_{sym}$ in our case. But this term can be upper bounded so that it no longer depends on $\hat{Q}^*_{sym}$ in any way:
    \begin{align}
        C_\Theta(s, y) &= \kappa_S \hat{C}(s)\sum_i  ||s_i|| \cdot  ||\sum_{j,k} \Theta_{ij}^{-1} y_{j} ||_\infty \nonumber \\
        &\le \kappa_S \hat{C}(s)\sum_i  ||s_i|| \cdot  \sum_{j,k} ||\Theta_{ij}^{-1}||_\infty \: || y_{j} ||_\infty \nonumber \\
        &\le \kappa_S \hat{C}(s)\sum_i  ||s_i|| \cdot  \sum_{j,k} ||\Theta_{ij}^{-1}||_\infty \: || (\hat{Q}^*_{sym})_{j} ||_\infty \nonumber \\
        &\le \kappa_S \hat{C}(s)\sum_i  ||s_i|| \cdot  \sum_{j,k} ||\Theta_{ij}^{-1}||_\infty \: || (Q^*)_{j} ||_\infty = C_\Theta(s) \label{thrm1:C}
    \end{align}
    where we used the definition of a pessimistic value function $\hat{Q}^\pi$.

    In conclusion, the network $q_\theta$ that is distilled with pessimistic value targets $\hat{Q}^*_{sym}$ is guaranteed (with probability $\ge 1 - \epsilon$) to be optimal in the testing CMDP $S^{\pi^*}_{\mathcal{M}|_{C_{test}}}$ if $\delta_Q > \max_{s \in S^{\pi^*}_{\mathcal{M}|_{C_{test}}}} 2C_\Theta(s) + 2C(\epsilon) = C_\Theta(\epsilon)$, where $C_\Theta(\epsilon)$ depends on the NTK $\Theta$ (i.e., network architecture), the dataset $\mathcal{D}_s$, the optimal Q-value $Q^*$ and the confidence level $\epsilon$. This $\delta_Q$ is guaranteed to exist, since the $C_\Theta(s)$ and $C(\epsilon)$ terms are positive and finite. Furthermore, as long as $\hat{Q}^*_{sym}$ satisfies this constraint on $\delta_Q$, its level of pessimism $\eta_{max} = \max \{Q^*(s,a) - \hat{Q}^*_{sym}(s,a) | s \in \mathcal{D}_s, a \in A \}$ can be arbitrarily large.  
\end{proof}

\subsection{Proof of Theorem \ref{thrm:negative}}
\label{app:proof_negative}
With this theorem, we attempt to demonstrate that if the pessimistic value targets violate the symmetry of the environment, this can hurt generalization. Although we believe this often holds in practice, proving it theoretically is more challenging than the positive result from Theorem \ref{thrm:main}. One of the reasons for this, is that a policy that is identical to the optimal one, is guaranteed to perform optimally. However, a policy that deviates from an optimal one, is not guaranteed to be suboptimal (it can still be equal to a distinct, but equally optimal policy). Therefore, we instead have to prove Theorem \ref{thrm:negative} by first providing a specific counter-example instance of the GTI-ZSPT setting for which we can prove suboptimal generalization performance.
\begin{restatable}{counter_example}{CounterExample}
\label{thrm:counter-example}
    Consider Q-value distillation in the GTI-ZSPT setting as defined in Definition \ref{def:distil-zspt}, with pessimistic Q-value targets $\hat{Q}^*_{asym}$ that do not satisfy the correct invariant symmetry of the GTI-ZSPT instance. Instead, $\hat{Q}^*_{asym}$ satisfies an incorrect equivariance under the group $B \le G$: $\hat{Q}^*_{asym}(s) = \psi^{-1}_Q(b)\hat{Q}^*_{asym}(\psi_S(b) s), \: \forall b \in B, s \in \mathcal{D}_s$, for some non-trivial equivariance transformations $\psi_Q$ over the Q values. Note that this equivariance is incorrect, since the true optimal Q-values are invariant to the group $B$: $Q^*(s) = Q^*(\psi_S(b) s), \: \forall b \in B, s \in \mathcal{D}_s$. 
    
    There exist instances of the GTI-ZSPT setting $\mathcal{Z}$, and choices of $\psi_Q$, where training the Q-network $q_\theta$ with pessimistic targets $\hat{Q}^*_{asym}$ guarantees (with probability $1 - \epsilon$) that the performance of the argmax policy $\pi_{q_\theta}$ is suboptimal in the testing CMDP $\mathcal{M|}_{C_{test}}$ (while being optimal in the training CMDP $\mathcal{M|}_{C_{train}}$).  Furthermore, depending on the specific CMDP, the optimality gap $J^\Delta(\pi_{q_\theta})$ can be arbitrarily large.   
\end{restatable}
\begin{proof}
       The proof uses the same bounds on the deviation from the perfectly equivariant infinite ensemble as Theorem \ref{thrm:main}. However, this time the deviation is with respect to the incorrect equivariance of $\hat{Q}^*_{asym}$, rather than the correct invariance as observed in $\hat{Q}^*_{sym}$. We present a specific one-step instance of the GTI-ZSPT setting for which it is easy to prove that an incorrect equivariance that rotates the Q-values of suboptimal actions, rotates these values in such a way that they become larger than the values of the optimal actions in at least one of the test states. As such, we can prove that the greedy argmax policy $\pi_{q_\theta}$ is guaranteed to be suboptimal (with probability $\ge 1 - \epsilon$) in those states, given a sufficient level of pessimism $\eta_{max}$. The proof is in Appendix \ref{app:counter-example}.
\end{proof}

With this counter-example instance, it is very straightforward to prove Theorem \ref{thrm:negative}:
\NegativeTheorem*
\begin{proof}
       This can be proven by simply taking a GTI-ZSPT instance from Counter-example \ref{thrm:counter-example}, and defining $\hat{Q}_1 = \hat{Q}^*_{sym}$ as in Theorem \ref{thrm:main}, and $\hat{Q}_2 = \hat{Q}^*_{asym}$ as in Counter-example \ref{thrm:counter-example}. As Theorem \ref{thrm:main} holds for arbitrarily large $\eta_{max}$, we can simply make $\eta_1 - \eta_2 \to \infty$ by taking $\eta_1 \to \infty$. 
\end{proof}

\subsection{Proof of Counter-example \ref{thrm:counter-example}}
\label{app:counter-example}
\CounterExample*
\begin{proof}
    In order to prove this in a straightforward way, we present a simple instance of the GTI-ZSPT setting $\mathcal{Z}$ as depicted in Figure \ref{fig:GTI-ZSPT-instance}. In this instance, the agent starts in a state $s_0 \in \{ (x,y) | x^2 + y^2 = c \}$ along a circle of radius $c$. There are four training contexts, defined by a starting state $s_0$ and its four $90^\circ$ rotations: $\mathcal{D}_s = \{s_0, s_{90}, s_{180}, s_{270} \}$. During testing, the agent can encounter any starting state along the circle. The agent only has three actions: irrespective of state, action one terminates the episode and produces a reward $r > 0$, and the other two actions do not terminate the episode and reward nothing. In this CMDP, the optimal value function is easily derived to be $Q^*(s) = [a_1, a_2, a_3] = [r, \gamma r, \gamma r], \: \forall s \in S$ for some discount factor $\gamma \in [0,1)$. Since this Q-value is the same for any of the states along the circle, it is rotationally invariant. 

    Now, we choose pessimistic targets $\hat{Q}^*_{asym}$ of the following form:
    \begin{align*}
        &\hat{Q}^*_{asym}(s_0) = [r, \gamma r , \gamma r] \\
        &\hat{Q}^*_{asym}(s_{90}) = [r, \gamma r -\eta, \gamma r] \\
        &\hat{Q}^*_{asym}(s_{180}) = [r, \gamma r -\eta, \gamma r -\eta] \\
        &\hat{Q}^*_{asym}(s_{270}) = [r, \gamma r , \gamma r -\eta] \\
    \end{align*}
    This $\hat{Q}^*_{asym}$ can be generated by taking the optimal Q-value in $s_0$, and for state $s_{\theta}$, rotating the Q-values of the two suboptimal actions by an angle $\theta$ around the vector $[0, \gamma r - \frac{\eta}{2}, \gamma r - \frac{\eta}{2}]$. In other words, the equivariance transformations $\psi_Q$ are defined as $\psi_Q(\theta) = T(\gamma r - \frac{\eta}{2}, \gamma r - \frac{\eta}{2}) R(\theta) T(-\gamma r + \frac{\eta}{2}, -\gamma r + \frac{\eta}{2})$, where $T(x,y)$ refers to the translation and $R(\theta)$ the rotation operation in the $(a_2, a_3)$ space. Note that this $\hat{Q}^*_{asym}$ has a level of pessimism equal to $\eta_{max} = \eta$ which is determined by the choice of $\psi_Q$.

    The key insight is that for the testing state $s_{45}$ (generated from $s_0$ by rotating by 45 degrees), the rotated value function is equal $\psi_Q(45) \hat{Q}^*_{asym}(s_0) = [r, 0, \gamma r + 0.21 \eta]$. So, for certain choices of $\psi_Q$ (and therefore of the level of pessimism $\eta$), $\gamma$ and $r$, the suboptimal action $a_3$ in $s_{45}$ has a higher value than the optimal action $a_1$. Since the suboptimal actions do not change the state, at the next timestep they will be chosen over the optimal one again. This means the rotated value function in $s_{45}$ results in 0 return, compared to the optimal return of $r$. 

    Now, we prove that for large enough values of $\eta$, the network $q_\theta$ trained on $\hat{Q}^*_{asym}$ is guaranteed to choose a suboptimal action in state $s_{45}$. We use Lemma 1 to write:
    \begin{align*}
        || \bar{q}_{\theta_\infty}(s_{45}) - \psi_Q(-45)^\top \bar{q}_{\theta_\infty}(\psi_Q(-45)s_{45})||_{\infty} &\le D_\Theta(s_{45},y) \\
        || \bar{q}_{\theta_\infty}(s_{45}) - \psi_Q(45) \bar{q}_{\theta_\infty}(s_0)||_{\infty} &\le D_\Theta(s_{45},y) \\
        || \bar{q}_{\theta_\infty}(s_{45}) - \psi_Q(45) \hat{Q}^*_{asym}(s_0)||_{\infty} &\le D_\Theta(s_{45},y) \\
    \end{align*}
    where we used the fact that for $\bar{q}_{\theta_{\infty}}(s) = \hat{Q}^*_{asym}(s), \: \forall s \in \mathcal{D}_s$. Now using Lemma 2:
    \begin{align*}
        &||q_{\theta_\infty}(s_{45}) - \psi_Q(45) \hat{Q}^*_{asym}(s_0)||_{\infty} = ||q_{\theta_\infty}(s_{45}) - \bar{q}_{\theta_\infty}(s_{45}) + \bar{q}_{\theta_\infty}(s_{45}) - \psi_Q(45) \hat{Q}^*_{asym}(s_0)||_{\infty} \\
        &\le ||q_{\theta_\infty}(s_{45}) - \bar{q}_{\theta_\infty}(s_{45}) ||_{\infty} + || \bar{q}_{\theta_\infty}(s_{45}) - \psi_Q(45) \hat{Q}^*_{asym}(s_0)||_{\infty} \\
        &< D_\Theta(s_{45},y) + C(\epsilon), \: \text{ with probability } \ge 1 - \epsilon \\
    \end{align*}

    We use the above bound to prove that the argmax policy $\pi_{q_{\theta}}$ chooses a suboptimal action with probability $1 - \epsilon$ (and receives return 0) in the state $s_{45}$ if the following relation holds:
    \begin{align*}
        \frac{\big[\psi_Q(45) \hat{Q}^*_{asym}(s_0)\big](a_2) - \big[\psi_Q(45) \hat{Q}^*_{asym}(s_0)\big](a_0)}{2} &> D_\Theta(s_{45},y) + C(\epsilon)    \\
        \frac{\gamma r + 0.21 \eta  - r}{2} &> D_\Theta(s_{45},y) + C(\epsilon)   \\
        \eta &> \frac{(1-\gamma) r + 2D_\Theta(s_{45},y) + 2C(\epsilon)}{0.21}  \\
        \eta &> D_{\Theta, \mathcal{Z}}(\epsilon)
    \end{align*}
    Where we used a derivation like in \eqref{thrm1:C} to bound the term $D_\Theta(s_{45},y) \to D_\Theta(s_{45})$ so that it no longer depends on $y = \hat{Q}^*_{asym}$, and we define a new constant $D_{\Theta, \mathcal{Z}}(\epsilon)$ that depends only on the NTK $\Theta$ (i.e., network architecture), the particular GTI-ZSPT instance $\mathcal{Z}$ and the confidence level $\epsilon$. Since r,  $D_\Theta(s_{45})$ and $C(\epsilon)$ are all finite, there always exists a choice of $\psi_Q$, and therefore a level of pessimism $\eta$, for which the above constraint holds. In this case, the policy $\pi_{q_{\theta}}$ is guaranteed (with probability $1 - \epsilon$) to be suboptimal in at least one test state. Now, let's say that the contribution of state $s_{45}$ to the testing performance is given by $w_{s_{45}} > 0$, the optimality gap is lower bounded by:
    \begin{align*}
        J^\Delta(\pi_{q_{\theta}}) = J^{\pi^*} - J^{\pi_{q_{\theta}}} &\ge r - (1 - w_{s_{45}}) r \\
        &\ge w_{s_{45}} r 
    \end{align*}
    which goes to $J^\Delta(\pi_{q_{\theta}}) \to \infty$ as $r \to \infty$. 
\end{proof}

\begin{figure}[h]
        \centering
        \includegraphics[width=0.35\textwidth]{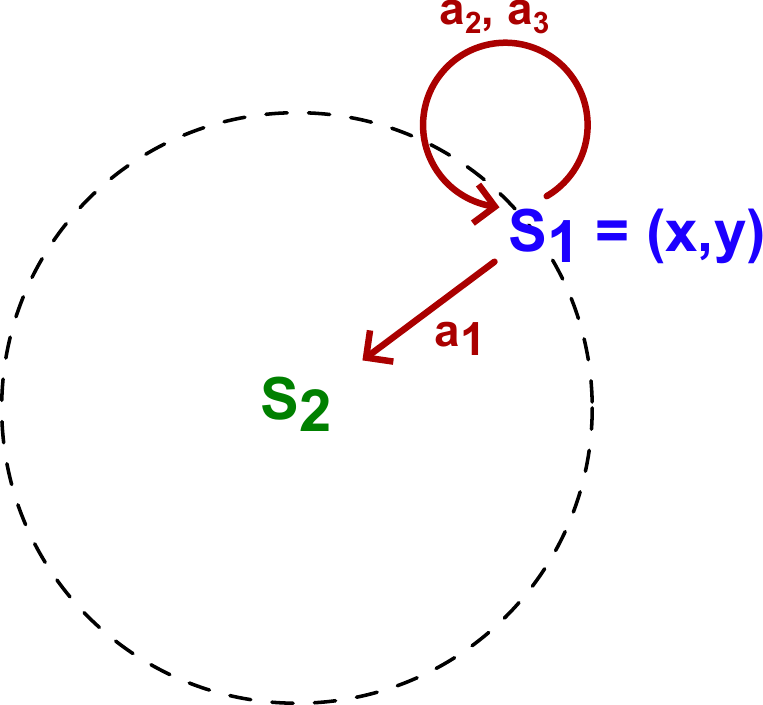}
        \caption{Illustration of the one-step, rotationally invariant GTI-ZSPT instance used in the proof of Theorem \ref{thrm:negative}.}
        \label{fig:GTI-ZSPT-instance}
    \end{figure}

\subsection{Equivariance of an ensemble}
\label{app:proof_equivariance}
We need to prove bounds on the equivariance of an infinitely large ensemble of infinitely wide neural networks trained with full data augmentation on some finite subgroup $B \le G$:

\LemmaEquivarianceSubgroups
\begin{proof}
	Lets denote a set of states with $\mathcal{D} = \{ s_i \}_{i=1}^n$ and a training dataset $\mathcal{T} = \{ (s_i, y_i) | \forall s_i \in \mathcal{D}, y_i \in \mathcal{Y} \}$ where $y_i \in \mathcal{Y}$ indicates the target for sample $s_i$. The function $\bar{f}_t$ is trained on the augmented dataset $\mathcal{T}_{B} = (\mathcal{D}_B, \mathcal{Y}_B) = \{ (\psi_S(b)s, \psi_Y(b)y) | \forall (s,y) \in \mathcal{T}, b \in B\}$ where $B \le G$. 
	
	Since $\bar{f}_t$ is trained on a dataset with data augmentation for subgroup $B$, it is fully equivariant to any transformation $b \in B$ \citep[Theorem 5.3, ][]{gerken_emergent_2024}. Therefore, we can rewrite the expression $\Delta f := \big|\big|\bar{f}_t \big(s \big) - \psi_Y(g)^\top \bar{f}_t \big(\psi_S(g) s \big) \big|\big|_p$:
    \begin{align*}
        &\Delta f = \big|\big|\psi_Y(b)^\top\bar{f}_t \big(\psi_S(b)s \big) - \psi_Y(g)^\top \bar{f}_t \big(\psi_S(g) s \big) \big|\big|_p \\
        &= \big|\big| \psi_Y(b)^\top\bar{f}_t \big(\psi_S(b)s \big) - \psi_Y(b)^\top\bar{f}_t \big(\psi_S(g)s \big) + \psi_Y(b)^\top\bar{f}_t \big(\psi_S(g)s \big) - \psi_Y(g)^\top \bar{f}_t \big(\psi_S(g) s \big) \big|\big|_p \\
        &\le \big|\big| \psi_Y(b)^\top\bar{f}_t \big(\psi_S(b)s \big) - \psi_Y(b)^\top\bar{f}_t \big(\psi_S(g)s \big) \big|\big|_p + \big|\big| \psi_Y(b)^\top\bar{f}_t \big(\psi_S(g)s \big) - \psi_Y(g)^\top \bar{f}_t \big(\psi_S(g) s \big) \big|\big|_p \\
        &= \big|\big| \psi_Y(b)^\top \bigg( \bar{f}_t \big(\psi_S(b)s \big) - \bar{f}_t \big(\psi_S(g)s \big) \bigg) \big|\big|_p + \big|\big| \bigg( \psi_Y(b)^\top - \psi_Y(g)^\top \bigg) \bar{f}_t \big(\psi_S(g) s \big) \big|\big|_p \\
        &\le \big|\big| \bar{f}_t \big(\psi_S(b)s \big) - \bar{f}_t \big(\psi_S(g)s \big) \big|\big|_p + \big|\big| \psi_Y(b)^\top - \psi_Y(g)^\top \big|\big|_{op} \: \big|\big| \bar{f}_t \big(\psi_S(g) s \big) \big|\big|_p \\
		&\le \big|\big| \bar{f}_t \big(\psi_S(b)s \big) - \bar{f}_t \big(\psi_S(g)s \big) \big|\big|_p + \kappa_Y \max_g ||\bar{f}_t(\psi_S(g)s)||_p \\
		&= \big|\big| \bar{f}_t \big(\psi_S(b)s \big) - \bar{f}_t \big(\psi_S(g)s \big) \big|\big|_p + D_\Theta(s) \\
    \end{align*}
    where we used the definition of the operator norm, the definition of $\kappa_Y$, the triangle inequality, and the fact that the operator norm of an orthogonal operator is equal to 1 ($||\psi_Y||=1$). 

    Now, in order to evaluate the first term, we can use the definition of the mean of the NNGP $m_t$ to write for any $s \in S$ and any $b \in B$, $g \in G$
    \begin{align*}
		&||\bar{f}_t \big( \psi_S(b)s \big) - \bar{f}_t \big( \psi_S(g)s \big)||_p =  ||m_t \big( \psi_S(b)s \big) - m_t \big( \psi_S(g)s \big)||_p \\
		&= ||(\Theta(\psi_S(b)s, \mathcal{D}_B) - \Theta(\psi_S(g)s, \mathcal{D}_B)) \Theta^{-1} (\mathbb{I} - \exp(-\eta\Theta t)) \mathcal{Y}_B||_p \: \\
		&= || \sum_i (\Theta(\psi_S(b)s, s_i) - \Theta(\psi_S(g)s, s_i)) \big( \sum_{j,k} \Theta_{ij}^{-1} (\mathbb{I} - \exp(-\eta\Theta t))_{jk} y_{k} \big) ||_p \\
		&= || \sum_i (\Theta(s, \psi_S^{-1}(b)s_i) - \Theta(s, \psi_S^{-1}(g)s_i)) \big( \sum_{j,k} \Theta_{ij}^{-1} (\mathbb{I} - \exp(-\eta\Theta t))_{jk} y_{k} \big) ||_p \\
		&\le  \sum_i || (\Theta(s, \psi_S^{-1}(b)s_i) - \Theta(s, \psi_S^{-1}(g)s_i)) \big( \sum_{j,k} \Theta_{ij}^{-1} (\mathbb{I} - \exp(-\eta\Theta t))_{jk} y_{k} \big )||_p   \\
		&\le  \sum_i || (\Theta(s, \psi_S^{-1}(b)s_i) - \Theta(s, \psi_S^{-1}(g)s_i)) ||_{p,p} ||\sum_{j,k} \Theta_{ij}^{-1} (\mathbb{I} - \exp(-\eta\Theta t))_{jk} y_{k} ||_p   \\
    \end{align*}
	where we used the invariance of the NTK $\Theta(s, s')$ \citep[Theorem 5.1, ][]{gerken_emergent_2024}, the consistency of the p-norm induced matrix norm $||\cdot||_{p,p}$ and the triangle inequality. We now show for the following expression:
	\begin{align*}
		\Delta \Theta(s', s, \bar{s}) &= || \Theta(s',s) - \Theta(s',\bar{s})||_{p,p}   \\
		&= \bigg|\bigg| \sum_{l=1}^L \mathbb{E}_{\theta \sim \mu} \bigg[ \sum_{\theta^{(l)}} \frac{\partial f_\theta(s')}{\partial \theta^{(l)}} \otimes \frac{\partial f_\theta(s)}{\partial \theta^{(l)}} - \sum_{\theta^{(l)}} \frac{\partial f_\theta(s')} {\partial \theta^{(l)}} \otimes\frac{\partial f_\theta(\bar{s})}{\partial \theta^{(l)}}\bigg) \bigg] \bigg|\bigg|_{p,p} \\
		&= \bigg|\bigg| \sum_{l=1}^L \mathbb{E}_{\theta \sim \mu} \bigg[ \sum_{\theta^{(l)}} \frac{\partial f_\theta(s')}{\partial \theta^{(l)}} \otimes \bigg( \frac{\partial f_\theta(s)}{\partial \theta^{(l)}} - \frac{\partial f_\theta(\bar{s})}{\partial \theta^{(l)}}\bigg) \bigg] \bigg|\bigg|_{p,p} \\
		&\le \sum_{l=1}^L \mathbb{E}_{\theta \sim \mu} \bigg[  \sum_{\theta^{(l)}} \bigg|\bigg| \frac{\partial f_\theta(s')}{\partial \theta^{(l)}} \otimes \bigg( \frac{\partial f_\theta(s)}{\partial \theta^{(l)}} - \frac{\partial f_\theta(\bar{s})}{\partial \theta^{(l)}}\bigg) \bigg|\bigg|_{p,p} \bigg] \\
		&= \sum_{l=1}^L \mathbb{E}_{\theta \sim \mu} \bigg[  \sum_{\theta^{(l)}} \bigg|\bigg| \frac{\partial f_\theta(s')}{\partial \theta^{(l)}} \bigg|\bigg|_p \bigg|\bigg|  \frac{\partial f_\theta(s)}{\partial \theta^{(l)}} - \frac{\partial f_\theta(\bar{s})}{\partial \theta^{(l)}} \bigg|\bigg|_{q} \bigg], \text{ where } \frac{1}{p} + \frac{1}{q} = 1 \\
		&\le \sum_{l=1}^L \mathbb{E}_{\theta \sim \mu} \bigg[  \sum_{\theta^{(l)}} \bigg|\bigg| \frac{\partial f_\theta(s')}{\partial \theta^{(l)}} \bigg|\bigg|_p L(\theta^{(l)}) ||  s - \bar{s} || \bigg] \\
		&= ||s - \bar{s}|| \sum_{l=1}^L \mathbb{E}_{\theta \sim \mu} \bigg[  \sum_{\theta^{(l)}} \bigg|\bigg| \frac{\partial f_\theta(s')}{\partial \theta^{(l)}} \bigg|\bigg|_p L(\theta^{(l)}) \bigg] \\
		&= ||s - \bar{s} || \hat{C}(s')
	\end{align*}
	where $L(\theta^{(l)})$ is the Lipschitz constant of $\partial_{\theta^{(l)}}f_\theta$, and we used the following property for the matrix norm of the outer product of two vectors: $||a \otimes b||_{p,p} = ||a||_p \: ||b||_q, \: \text{ where } \frac{1}{p} + \frac{1}{q} = 1$.

	Plugging this into the previous expression gives us:
	\begin{align*}
    & \big|\big| \bar{f}_t \big(\psi_S(b)s \big) - \bar{f}_t \big(\psi_S(g)s \big) \big|\big|_p \\
    &\le  \sum_i \hat{C}(s) ||\psi_S^{-1}(b)s_i - \psi_S^{-1}(g)s_i|| \cdot  ||\sum_{j,k} \Theta_{ij}^{-1} (\mathbb{I} - \exp(-\eta\Theta t))_{jk} y_{k} ||_p   \\
	&=  \sum_i \hat{C}(s) ||\big( \psi_S^{-1}(b) - \psi_S^{-1}(g) \big)s_i|| \cdot  ||\sum_{j,k} \Theta_{ij}^{-1} (\mathbb{I} - \exp(-\eta\Theta t))_{jk} y_{k} ||_p   \\
	&\le  \sum_i \hat{C}(s) ||\psi_S^{-1}(b) - \psi_S^{-1}(g)||_{op} ||s_i|| \cdot  ||\sum_{j,k} \Theta_{ij}^{-1} (\mathbb{I} - \exp(-\eta\Theta t))_{jk} y_{k} ||_p   \\
	&\le \kappa_S \hat{C}(s)\sum_i  ||s_i|| \cdot  ||\sum_{j,k} \Theta_{ij}^{-1} (\mathbb{I} - \exp(-\eta\Theta t))_{jk} y_{k} ||_p  = C_\Theta(s, y) 
    \end{align*}

    Finally, we have
    \begin{align}
        \big|\big|\bar{f}_t \big(s \big) - \psi_Y(g)^\top \bar{f}_t \big(\psi_S(g) s \big) \big|\big|_p \leq C_\Theta(s,y) + D_\Theta(s) =  D_\Theta(s, y), \qquad \forall g \in G
    \end{align}
\end{proof}

\subsection{Difference between infinite ensemble and single network}
\label{app:proof_finite}
\LemmaFinite*
\begin{proof}
    We use the following result from \citet{vershynin_high-dimensional_2009}: 
	\begin{lemma*}
		The probability that the infinite ensemble $\bar{f}_t$ and a single network $f_t$ differ by more than a given threshold $\delta$ is bounded by 
        \begin{equation*}
            \mathbb{P} \big[  ||\bar{f}_t(s) -  f_t(s)||_\infty < \delta \big] \ge 1 - 2 d e^{c \delta^2},
        \end{equation*}
        where $d$ is the output dimension of $f$ and $c$ is an absolute constant. 
	\end{lemma*}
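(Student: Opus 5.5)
The plan is to use the fact that, in the infinite width limit, a single network at training time $t$ is a draw from the Gaussian process described in Appendix \ref{app:background-infinite}. For a fixed state $s$, the output $f_t(s)\in\mathbb{R}^d$ is therefore a multivariate Gaussian random vector, where the randomness comes from the initialization $\theta\sim\mu$. Its mean is $m_t(s)$, which by the identity $\bar{f}_t(s)=m_t(s)$ is exactly the infinite ensemble prediction. Its covariance is $\Sigma_t(s)$. So the difference $X := f_t(s)-\bar{f}_t(s)$ is a centered Gaussian vector with covariance $\Sigma_t(s)$, and the statement reduces to a tail bound on the sup-norm of such a vector.

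First I would pass to coordinates. Each component $X_k$, for $k=1,\dots,d$, is a centered scalar Gaussian with variance $\sigma_k^2=[\Sigma_t(s)]_{kk}$. Let $\sigma^2_{\max}=\max_k \sigma_k^2$, which is finite because the NTK and NNGP kernels are finite. The standard Gaussian (or, more generally, sub-Gaussian) tail estimate in \citet{vershynin_high-dimensional_2009} gives
\begin{equation*}
\mathbb{P}\big[|X_k|\ge\delta\big]\le 2\exp\!\big(-\delta^2/(2\sigma_k^2)\big)\le 2\exp\!\big(-\delta^2/(2\sigma_{\max}^2)\big).
\end{equation*}

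Second, I would control all coordinates at once with a union bound:
\begin{equation*}
\mathbb{P}\big[\|X\|_\infty\ge\delta\big]\le\sum_{k=1}^d\mathbb{P}\big[|X_k|\ge\delta\big]\le 2d\exp\!\big(-\delta^2/(2\sigma_{\max}^2)\big).
\end{equation*}
Taking complements gives the stated form with $c=-1/(2\sigma^2_{\max})$. The exponent in the statement must therefore be read as negative. Strictly speaking, $c$ is not an absolute constant: it depends on the posterior variance $\Sigma_t(s)$, and hence on the NTK, the NNGP kernel, the data and $t$. It becomes absolute only after normalizing by $\sigma_{\max}$. No independence between coordinates is needed, because the union bound works for arbitrary correlations.

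Finally, to obtain Lemma 2, I would set the right-hand side equal to $\epsilon$ and solve for $\delta$:
\begin{equation*}
C(\epsilon)=\sigma_{\max}\sqrt{2\log(2d/\epsilon)},
\end{equation*}
which is finite for every $\epsilon>0$. To obtain a constant that is uniform over the finitely many test states used later, I would take $\sigma_{\max}$ as a supremum of $\Sigma_t(s)$ over those states, including $t\to\infty$.

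The main obstacle is justifying that a single trained finite-but-wide network is genuinely distributed as this Gaussian, with mean exactly $\bar{f}_t$ at the training time of interest. I would handle this by citing the convergence result of \citet{lee_wide_2019} for gradient flow under the Lipschitz-derivative assumption, and by treating the infinite width limit as exact, as the rest of the paper does.
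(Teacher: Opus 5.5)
Your argument is correct and is essentially what the paper's one-line citation of Vershynin stands for: in the infinite-width limit $f_t(s)-\bar{f}_t(s)\sim\mathcal{N}(0,\Sigma_t(s))$, so a Gaussian tail bound on each output coordinate plus a union bound over the $d$ coordinates gives the stated form. You are also right that the exponent must be negative, with $c=-1/(2\sigma_{\max}^2)$ depending on $\Sigma_t(s)$: the paper's statement drops the variance (sub-Gaussian norm) factor present in Vershynin's bound, and with $c$ independent of the variance the inequality fails once the outputs are scaled up, so your version is the one that actually holds and is what Lemma 2 needs.
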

    \begin{proof}
    	This follows from Proposition 2.7.6 from \citet{vershynin_high-dimensional_2009}.
    \end{proof}
    We would like to have an expression for $\epsilon$ so that:
    \begin{align*}
        \mathbb{P} \big[ ||\bar{f}_t(s) -  f_t(s)||_\infty < \delta \big] &\geq 1 - \epsilon
    \end{align*}
    This gives us $\epsilon = 2 d e^{c \delta^2}$. 
    Next, we rewrite $\delta$ in terms of the given confidence level $\epsilon$:
    \begin{align*}
        \epsilon &= 2 d e^{c \delta^2} \\
        \sqrt{\frac{\ln \big( \epsilon / 2d \big)}{c}} &= \delta  = C(\epsilon)\\
    \end{align*}
\end{proof}

\newpage
\section{Experimental Details}
\label{app:exp-details}

\subsection{Theoretical validation}
\label{app:exp-val}
The theoretical results are validated by performing value distillation in the Rotational Reacher environment from \citet[Figure \ref{fig:rotational_reacher}, ][]{weltevrede_how_2025} (where we use the $C_4$ training tasks and test on any rotation). The environment has a 2 dimensional, continuous action space consisting of the torque applied to the shoulder and elbow joint. However, in order to ease the construction of pessimistic value functions, we discretize the action space into 9 actions, evenly spaced over the full range of allowed torques (from $[-2,-2]$ to $[2, 2]$). 

We compare a symmetric pessimistic value function with a non-symmetric (incorrectly equivariant) pessimistic value function for varying degrees of pessimism $\eta_{max}$. A ground truth optimal value function or policy is not known, and so we train a deep Q network agent \citet[DQN, ][]{mnih_human-level_2015} (hyperparameters can be found in Table \ref{tab:DQN}). We train DQN only on the first training context in Figure \ref{fig:rotational_reacher}, and then (symmetrically) use the learned value and policy for the other three training contexts. The reason for this is to ensure the rotational symmetry of our approximate optimal policy (which likely would not be exactly symmetric if we simply trained DQN on all four training contexts). 

We create the distillation datasets by collecting the 'on-policy states for the optimal policy' by unrolling the (approximately optimal) greedy Q value policy in the training contexts, and then combining those states with two pessimistic value function targets $\hat{Q}^*_{sym}$ and $\hat{Q}^*_{asym}$. $\hat{Q}^*_{sym}$ is constructed by taking the learned Q value from DQN as an approximation for the ground truth optimal Q-values and subtracting a constant pessimism factor equal to $\eta_{max}$ from the Q-value of each suboptimal action. The suboptimal actions for a state are defined as all the actions except the one that the greedy policy chooses (the one with highest Q-value) in that state. 

The construction of $\hat{Q}^*_{asym}$ is a bit more involved. In theory, if the pessimistic targets $\hat{Q}^*_{asym}$ have at least one optimal action with higher value than any of the suboptimal ones, for each state in $\mathcal{D}_s$ (in other words, $\delta_Q > 0$), the infinitely wide neural network $q_{\theta_{\infty}}$ trained for infinite steps will be perfectly optimal in the training contexts. This is because the infinitely wide neural network will perfectly learn to predict the training targets at $t \to \infty$, without approximation error. However, a finite width neural network, trained for finite steps, will have some non-zero approximation error. This approximation error could cause the argmax policy to select a suboptimal action if $\delta_Q$ is smaller than the approximation error. To ensure the finite network $q_{\theta_{t}}$ will in practice choose the optimal actions in the training contexts, we use a baseline level of symmetric pessimism for the $\hat{Q}^*_{asym}$ targets equal to $\eta_{base} = 0.01$. This effectively sets $\delta_Q$ to be large enough so that the approximation error will not cause the greedy policy to be suboptimal in the training contexts. We then construct the equivariant pessimism on top of this baseline level of pessimism. Specifically, the equivariant pessimistic targets are defined as $\hat{Q}^*_{sym}$ for $\eta = 0.01$, plus a rotation of suboptimal actions 1 and 5 (we found these actions to be suboptimal in each state in $\mathcal{D}_s$). Mathematically, this comes down to, for a state $s \in \mathcal{D}_s$, rotating action 1 and 5 around the vector $[Q^*(s,a_1) - \frac{\eta_{max} - 0.01}{2}, 0, 0, 0, Q^*(s,a_5) - \frac{\eta_{max} - 0.01}{2}, 0, 0, 0, 0]$ (similar to the equivariant rotation in the proof of Theorem \ref{thrm:negative} in Appendix \ref{app:proof_negative}), and then subtracting $0.01$ from all the suboptimal actions (including the rotated ones).   The result is a pessimistic value function $\hat{Q}^*_{asym}$, that for $\eta_{max} = 0.01$ is actually the same as $\hat{Q}^*_{sym}$, but for all $\eta_{max} > 0.01$ is an equivariant function where the Q-values for action 1 and 5 are rotated by 90 degrees. 

We then perform Q-value distillation on these datasets (according to equation \eqref{eq:distil}) with the hyperparameters detailed in Table \ref{tab:distil}. The resulting network $q_{\theta}$ is greedily evaluated on the four training contexts, and a set of 100 testing contexts, constructed by uniformly randomly sampling shoulder rotations from the integers in the range $[0, 360)$ (excluding the training rotations $[0, 90, 180, 270]$).   

\begin{table}[h]
\centering
\caption{Hyper-parameters for DQN}
\label{tab:DQN}
\begin{tabular}{@{}ll@{}}
\toprule
\multicolumn{2}{c}{\large \textbf{DQN}} \\ \midrule
\textbf{Hyper-parameter}               & \textbf{Value}     \\ \midrule
Total timesteps                       & 1 500 000            \\
Vectorised environments               & 1                 \\ 
Buffer size                           & 500 000            \\
Warmup                                 & 50 000             \\
Batch size                            & 512                \\
Discount factor $\gamma$              & 0.95               \\
Max. gradient norm                    & 1                  \\
Gradient steps                        & 1                  \\
Train frequency (steps)               & 50                 \\
Target update interval (steps)        & 100                 \\
Target soft update coefficient  & 0.01              \\
E-greedy exploration initial $\epsilon$        & 1                  \\
E-greedy exploration final $\epsilon$          & 0.1               \\
E-greedy exploration fraction $\epsilon$       & 0.66               \\ \midrule
\multicolumn{2}{c}{\textbf{Adam}}                          \\
Learning rate                        & $5 \times 10^{-5}$ \\   \midrule
\multicolumn{2}{c}{\textbf{Network}}                          \\
Architecture                        & MLP \\   
Activation function                        & ReLU \\   
Hidden dimensions                        & [512, 256, 128] \\     
\end{tabular}
\end{table}

\begin{table}[h]
\centering
\caption{Hyper-parameters for Value Distillation}
\label{tab:distil}
\begin{tabular}{@{}ll@{}}
\toprule
\multicolumn{2}{c}{\large \textbf{Value Distillation}} \\ \midrule
\textbf{Hyper-parameter}               & \textbf{Value}     \\ \midrule
Epochs                       & 2000            \\
Batch size                            & 6                \\ \midrule
\multicolumn{2}{c}{\textbf{Adam}}                          \\
Learning rate                        & $5 \times 10^{-4}$ \\   \midrule
\multicolumn{2}{c}{\textbf{Network}}                          \\
Architecture                        & MLP \\   
Activation function                        & ReLU \\   
Hidden dimensions                        & [512, 256, 128] \\     
\end{tabular}
\end{table}

\newpage
\subsection{Data augmentation experiments}
\label{app:exp-da}
For the offline RL experiments we collect three datasets the Rotational Reacher experiment from \citet[Figure \ref{fig:rotational_reacher}, ][]{weltevrede_how_2025}. Just as in the value distillation experiments, we discretize the action space into 9 evenly spaced actions. We collect data only in the first training context in Figure \ref{fig:rotational_reacher}, and then perform different DA approaches using the $C_4$ group of $90^\circ$ rotations. As our expert policy, we use the same DQN agent as was used in the value distillation experiments described in Appendix \ref{app:exp-val}. We then use this policy to construct three datasets with varying degrees of optimality:
\begin{itemize}
    \item \textbf{Expert:} The expert dataset consists of 10 trajectories of the greedy DQN policy in context 1 in Figure  \ref{fig:rotational_reacher}.
    \item \textbf{Suboptimal:} The suboptimal dataset consists of 10 trajectories from the first training context in Figure  \ref{fig:rotational_reacher} obtained by rolling out the $\epsilon$-greedy DQN policy with $\epsilon = 0.6$. The $\epsilon$-greedy DQN policy follows a random action with probability $\epsilon$ and the greedy DQN policy with probability $1-\epsilon$. 
    \item \textbf{Mixed:} The mixed dataset consists of 5 trajectories of the greedy DQN policy, and 5 trajectories of the $\epsilon$-greedy DQN policy. 
\end{itemize}

We train a CQL and IQL agent on these datasets and evaluate them on the single training contexts and 100 randomly sampled testing contexts (where testing angles are sampled from the integers in the range $[0, 360)$). Note that for CQL, the data augmentation is applied to the value function, whereas for IQL it can be applied to the actor, the critic, or both (see Appendix \ref{app:additional-iql} for more results and discussion on this). 

\subsubsection{CQL}
We first perform hyperparameter tuning for a baseline (no DA) CQL agent by performing a grid search over the following values (5 seeds per hyperparameter combination):
\begin{itemize}
    \item \textbf{Learning rate:} $\{1*10^{-4}, 5*10^{-4}, 1*10^{-3}\}$
    \item \textbf{Batch size:} $\{16, 64, 128\}$
    \item \textbf{CQL loss coefficient:} $\{0.5, 5, 10\}$
\end{itemize}
and selecting the hyperparameters with highest train and validation performance (from a separately sampled set of 100 validation contexts). 

We fix these tuned hyperparameters across all our CQL experiments and train 20 seeds (different from the tuning seeds) on 20 newly generated datasets (different from the tuning datasets) for our final results. For the Aug-D and Aug-D-Online experiments we simply run baseline CQL, but on a minibatch twice as large due to concatenation of the randomly augmented observations $[o_t, o^{aug}_t]_B$ (random $90^\circ$ rotations), and randomly augmented next-observations $[o_{t+1}, o^{aug}_{t+1}]_B$ or original next-observations $[o_{t+1}, o_{t+1}]_B$ respectively. This means the difference between Aug-D and Aug-D-online is that the former also augments the observations used for the bootstrapped next-state value in the DQN loss of CQL. 

For the DAC-Latent and DAC-Output experiments we simply run baseline CQL but with an additional consistency loss, that simply minimizes the mean-squared error (MSE) between the latent (last hidden layer, DAC-Latent) or output (Q-values, DAC-Output) for the original and augmented observations. This additional loss introduces an additional hyperparameter: the consistency coefficient. We perform a small search over this additional hyperparameter by taking the tuned hyperparameters for the baseline CQL, and additionally searching over 5 seeds each for the consistency coefficient values $\{1, 10, 100\}$. We then choose the best coefficient based on a train and validation performance, and perform the final experiment with 20 new seeds and an independently sampled test set. The final hyperparameters that we used can be found in Table \ref{tab:cql}.

\subsubsection{IQL}
Similar to the CQL experiments, we first perform a grid search for the baseline (no DA) IQL agent over the following values:
\begin{itemize}
    \item \textbf{Learning rate:} $\{1*10^{-4}, 5*10^{-4}, 1*10^{-3}\}$
    \item \textbf{Batch size:} $\{16, 64, 128\}$
    \item \textbf{IQL expectile:} $\{0.7, 0.8, 0.9\}$
    \item \textbf{IQL temperature:} $\{3, 7, 10\}$
\end{itemize}
and selecting the hyperparameters with highest train and validation performance (from a separately sampled set of 100 validation contexts). 

We fix these tuned hyperparameters across all our IQL experiments and train 20 seeds (different from the tuning seeds) on 20 newly generated datasets (different from the tuning datasets) for our final results. For the Aug-D and Aug-D-Online experiments, we run baseline IQL, but on a minibatch twice as large due to concatenation of the randomly augmented observations $[o_t, o^{aug}_t]_B$ (random $90^\circ$ rotations), and randomly augmented next-observations $[o_{t+1}, o^{aug}_{t+1}]_B$ or original next-observations $[o_{t+1}, o_{t+1}]_B$ respectively. For the -C variant, we only perform this DA during the value learning of the Q and state value functions. The only difference between Aug-D-C and Aug-D-Online-C is that the former also augments the inputs to next-state value targets bootstrapped from the state-value function during training of the Q-value. For the -A variant, we only perform DA during the policy extraction phase (which uses Advantage Weighted Regression, a form of Weighted Behavior Cloning). This means that both approaches perform weighted behavior cloning on both the original and augmented samples. The only difference between Aug-D-A and Aug-D-Online-A is that the former calculates the weights by bootstrapping values on the augmented observations, and the later always uses the weights derived from the original observations. The -AC approach simply uses both these approaches in parallel. 

For the DAC-Latent and DAC-Output experiments we again perform a small search over the consistency coefficient values $\{ 1, 10, 100\}$ and choose the best ones. For the -C variant, the consistency loss is added to the state and Q value learning, implemented a an MSE loss on the last hidden layer or on the state/Q value output of the networks. The -A variant adds the consistency loss on policy during policy extraction, which is implemented as an MSE loss on the last hidden layer or the logits over actions. The -AC variant adds the consistency loss to both the value learning and policy extraction. The final hyperparameters can be found in Table \ref{tab:iql}. 

\begin{table}[h]
\centering
\caption{Hyper-parameters for CQL}
\label{tab:cql}
\begin{tabular}{@{}ll@{}}
\toprule
\multicolumn{2}{c}{\large \textbf{CQL}} \\ \midrule
\textbf{Hyper-parameter}               & \textbf{Value}     \\ \midrule
Epochs                       & 2000            \\
Architecture                        & MLP \\   
Activation function                        & ReLU \\   
Hidden dimensions                        & [512, 256, 128] \\ 
Target update frequency                 & 100               \\ \midrule   

\multicolumn{2}{c}{\textbf{Expert}}                          \\ 
Batch size                            & 128                \\
Learning rate                        & $1 \times 10^{-4}$ \\   
CQL loss coefficient                &   5                  \\ 
DAC-Latent coefficient             &  100                 \\ 
DAC-Output coefficient             &  100                 \\ \midrule

\multicolumn{2}{c}{\textbf{Mixed}}                          \\ 
Batch size                            & 128                \\
Learning rate                        & $1 \times 10^{-4}$ \\   
CQL loss coefficient                &   10                  \\ 
DAC-Latent coefficient             &  100                 \\ 
DAC-Output coefficient             &  100                 \\ \midrule

\multicolumn{2}{c}{\textbf{Suboptimal}}                          \\ 
Batch size                            & 128                \\
Learning rate                        & $5 \times 10^{-4}$ \\   
CQL loss coefficient                &   5                  \\ 
DAC-Latent coefficient             &  100                 \\ 
DAC-Output coefficient             &  10                 \\ \midrule
\end{tabular}
\end{table}

\begin{table}[h]
\centering
\caption{Hyper-parameters for IQL}
\label{tab:iql}
\begin{tabular}{@{}ll@{}}
\toprule
\multicolumn{2}{c}{\large \textbf{IQL}} \\ \midrule
\textbf{Hyper-parameter}               & \textbf{Value}     \\ \midrule
Epochs                       & 2000            \\
Architecture                        & MLP \\   
Activation function                        & ReLU \\   
Hidden dimensions                        & [512, 256, 128] \\ 
Target update frequency                 & 100               \\ \midrule   

\multicolumn{2}{c}{\textbf{Expert}}                          \\ 
Batch size                            & 64                \\
Learning rate                        & $1 \times 10^{-3}$ \\   
IQL expectile                &   0.8                  \\ 
IQL temperature                &   7                  \\ 
DAC-Latent coefficient -C             &  10                 \\ 
DAC-Output coefficient -C             &  1                 \\ 
DAC-Latent coefficient -A             &  100                 \\ 
DAC-Output coefficient -A             &  10                 \\ 
DAC-Latent coefficient -AC             &  1, 10                 \\ 
DAC-Output coefficient -AC             &  1, 1                 \\ \midrule

\multicolumn{2}{c}{\textbf{Mixed}}                          \\ 
Batch size                            & 64                \\
Learning rate                        & $1 \times 10^{-3}$ \\   
IQL expectile                &   0.7                  \\ 
IQL temperature                &   7                  \\ 
DAC-Latent coefficient -C             &  1                 \\ 
DAC-Output coefficient -C             &  10                 \\ 
DAC-Latent coefficient -A             &  10                \\ 
DAC-Output coefficient -A             &  10                 \\ 
DAC-Latent coefficient -AC             &  100, 100                 \\ 
DAC-Output coefficient -AC             &  100, 100                 \\ \midrule

\multicolumn{2}{c}{\textbf{Suboptimal}}                          \\ 
Batch size                            & 16                \\
Learning rate                        & $1 \times 10^{-3}$ \\   
IQL expectile                &   0.8                  \\ 
IQL temperature                &   3                  \\ 
DAC-Latent coefficient -C             &  100                 \\ 
DAC-Output coefficient -C             &  10                 \\ 
DAC-Latent coefficient -A             &  100                 \\ 
DAC-Output coefficient -A             &  100                 \\ 
DAC-Latent coefficient -AC             &  10, 100                 \\ 
DAC-Output coefficient -AC             &  1, 10                 \\ \midrule
\end{tabular}
\end{table}

\newpage
\section{Additional Results}
\label{app:additional}

\subsection{CQL}
\label{app:additional-cql}
Here are the additional results for applying the different DA techniques to the CQL learned value function in the Rotational Reacher problem from Figure \ref{fig:rotational_reacher}. We see qualitatively similar results as we did for our main results for IQL in Table \ref{table:iql_a_results}. 

\begin{table}[h]
\vspace{-0mm}
  \caption{CQL test performance for various DA approaches in the Rotational Reacher problem from Figure 1. The agent trains on expert, mixed, and suboptimal datasets collected from context 1, with DA under the $90^\circ$ rotations.   Shown are the mean and standard deviation for 20 seeds, and in bold are the best returns per row including those with overlapping 95\% confidence intervals.}
  \label{table:cql_results}
  \centering
  \begin{tabular}{lccccc}
\toprule
		\textbf{CQL} & \textbf{No DA} &  \textbf{Aug-D:}  & \textbf{Aug-D-Online:}  &  \textbf{DAC-Latent:}  & \textbf{DAC-Output:} \\ \cmidrule(r){1-1}
Expert         & 0.54 $\pm$ 0.08 & 0.89 $\pm$ 0.11 &   0.88 $\pm$ 0.09    & 0.90 $\pm$ 0.07 & \textbf{0.96 $\pm$ 0.04} \\ 
Mixed         & 0.36 $\pm$ 0.09 & \textbf{0.76 $\pm$ 0.18} &   \textbf{0.76 $\pm$ 0.14}    & \textbf{0.69 $\pm$ 0.17} & \textbf{0.78 $\pm$ 0.16}  \\ 
Suboptimal      & 0.30 $\pm$ 0.09 & \textbf{0.60 $\pm$ 0.13} &   \textbf{0.57 $\pm$ 0.15}    & 0.45 $\pm$ 0.12 & \textbf{0.60 $\pm$ 0.12}  \\ \midrule
\end{tabular}
\end{table}

\subsection{IQL}
In this section we denote with \textbf{-C} when applying DA to the critic, \textbf{-A} when applying to the actor (omitted in the main results in the main text), or \textbf{-AC} when applying to both. In Table \ref{table:iql_c_results}, we see that applying DA to only the critic does not improve over the no DA baseline. This is because only the actor is used during testing, and it is extracted by evaluating critic on only the original dataset. As such, a symmetric critic has no impact on the testing performance\footnote{In our experiments, the critic and actor are trained with independent networks.}. Additionally, Table \ref{table:iql_ac_results} shows that applying DA to both the critic and actor, achieves roughly the same performance applying it only to the actor. 

\label{app:additional-iql}
\begin{table}[h]
\vspace{-0mm}
  \caption{IQL test performance for various DA approaches in the Rotational Reacher problem from Figure 1. The agent trains on expert, mixed, and suboptimal datasets collected from context 1, with DA under the $90^\circ$ rotations.   Shown are the mean and standard deviation for 20 seeds, and in bold are the best returns per row including those with overlapping 95\% confidence intervals.}
  \label{table:iql_c_results}
  \centering
  \setlength{\tabcolsep}{3.5pt}
  \begin{tabular}{lccccc}
\toprule
		\textbf{IQL}	    & \textbf{No DA} &  \textbf{Aug-D-C:}  & \textbf{Aug-D-Online-C:}  &  \textbf{DAC-Latent-C:}  & \textbf{DAC-Output-C:} \\ \cmidrule(r){1-1}
Expert         & \textbf{0.49 $\pm$ 0.10} & \textbf{0.49 $\pm$ 0.10} &   \textbf{0.49 $\pm$ 0.12}    & \textbf{0.55 $\pm$ 0.10} & \textbf{0.52 $\pm$ 0.09} \\ 
Mixed         & \textbf{0.34 $\pm$ 0.10} & \textbf{0.35 $\pm$ 0.06} &   \textbf{0.34 $\pm$ 0.09}    & \textbf{0.36 $\pm$ 0.09} & \textbf{0.33 $\pm$ 0.09} \\  
Suboptimal      & \textbf{0.32 $\pm$ 0.07} & \textbf{0.30 $\pm$ 0.09} &   \textbf{0.31 $\pm$ 0.07}    & \textbf{0.31 $\pm$ 0.09} & \textbf{0.32 $\pm$ 0.09} \\  \midrule
\end{tabular}
\end{table}

\begin{table}[h]
\vspace{-0mm}
  \caption{IQL test performance for various DA approaches in the Rotational Reacher problem from Figure 1. The agent trains on expert, mixed, and suboptimal datasets collected from context 1, with DA under the $90^\circ$ rotations.   Shown are the mean and standard deviation for 20 seeds, and in bold are the best returns per row including those with overlapping 95\% confidence intervals.}
  \label{table:iql_ac_results}
  \centering
  \setlength{\tabcolsep}{2.5pt}
  \begin{tabular}{lccccc}
\toprule
		\textbf{IQL}	    & \textbf{No DA} &  \textbf{Aug-D-AC:}  & \textbf{Aug-D-Online-AC:}  &  \textbf{DAC-Latent-AC:}  & \textbf{DAC-Output-AC:} \\ \cmidrule(r){1-1}
Expert         & 0.49 $\pm$ 0.10 & \textbf{0.98 $\pm$ 0.03} &   \textbf{1.0 $\pm$ 0.01}    & 0.97 $\pm$ 0.04 & \textbf{0.98 $\pm$ 0.02} \\ 
Mixed         & 0.34 $\pm$ 0.10 & 0.71 $\pm$ 0.12 &   0.67 $\pm$ 0.16    & 0.71 $\pm$ 0.15 & \textbf{0.96 $\pm$ 0.09} \\  
Suboptimal      & 0.33 $\pm$ 0.07 & 0.62 $\pm$ 0.12 &   0.62 $\pm$ 0.11    & 0.56 $\pm$ 0.16 & \textbf{0.91 $\pm$ 0.10} \\  \midrule
\end{tabular}
\end{table}

\begin{table}[h]
\vspace{-0mm}
  \caption{Performance in the training contexts for various DA approaches in the Rotational Reacher problem from Figure 1. The agent trains on expert, mixed, and suboptimal datasets collected from context 1, with DA under the $90^\circ$ rotations.   Shown are the mean and standard deviation for 20 seeds, and in bold are the best returns per row including those with overlapping 95\% confidence intervals.}
  \label{table:train_results}
  \centering
  \setlength{\tabcolsep}{2.5pt}
  \begin{tabular}{lccccc}
\toprule
		\textbf{CQL} & \textbf{No DA} &  \textbf{Aug-D:}  & \textbf{Aug-D-Online:}  &  \textbf{DAC-Latent:}  & \textbf{DAC-Output:} \\ \cmidrule(r){1-1}
Expert         & \textbf{1.0 $\pm$ 0.0} & \textbf{1.0 $\pm$ 0.0} &   \textbf{1.0 $\pm$ 0.0}    & \textbf{1.0 $\pm$ 0.0} & \textbf{1.0 $\pm$ 0.0} \\ 
Mixed         & \textbf{1.0 $\pm$ 0.0} & \textbf{1.0 $\pm$ 0.0} &   \textbf{1.0 $\pm$ 0.0}    & \textbf{0.95 $\pm$ 0.22} & \textbf{0.95 $\pm$ 0.22} \\ 
Suboptimal      & \textbf{1.0 $\pm$ 0.0} & \textbf{1.0 $\pm$ 0.0} &   \textbf{0.95 $\pm$ 0.22}    & 0.85 $\pm$ 0.36 & 0.85 $\pm$ 0.36 \\  \midrule

\textbf{IQL} & \textbf{No DA} &  \textbf{Aug-D-A:}  & \textbf{Aug-D-Online-A:}  &  \textbf{DAC-Latent-A:}  & \textbf{DAC-Output-A:} \\ \cmidrule(r){1-1}
Expert         & \textbf{1.0 $\pm$ 0.0} & \textbf{1.0 $\pm$ 0.0} &   \textbf{1.0 $\pm$ 0.0}    & \textbf{1.0 $\pm$ 0.0} & \textbf{1.0 $\pm$ 0.0} \\  
Mixed         & \textbf{1.0 $\pm$ 0.0} & \textbf{1.0 $\pm$ 0.0} &   \textbf{1.0 $\pm$ 0.0}    & \textbf{1.0 $\pm$ 0.0} & \textbf{1.0 $\pm$ 0.0} \\   
Suboptimal      & \textbf{0.75 $\pm$ 0.43} & \textbf{0.9 $\pm$ 0.30} &   \textbf{0.95 $\pm$ 0.22}    & \textbf{0.90 $\pm$ 0.30} & \textbf{0.80 $\pm$ 0.40} \\   \midrule

\textbf{IQL} & \textbf{No DA} &  \textbf{Aug-D-C:}  & \textbf{Aug-D-Online-C:}  &  \textbf{DAC-Latent-C:}  & \textbf{DAC-Output-C:} \\ \cmidrule(r){1-1}
Expert         & \textbf{1.0 $\pm$ 0.0} & \textbf{1.0 $\pm$ 0.0} &   \textbf{1.0 $\pm$ 0.0}    & \textbf{1.0 $\pm$ 0.0} & \textbf{1.0 $\pm$ 0.0} \\   
Mixed         & \textbf{1.0 $\pm$ 0.0} & \textbf{1.0 $\pm$ 0.0} &   \textbf{1.0 $\pm$ 0.0}    & \textbf{1.0 $\pm$ 0.0} & \textbf{1.0 $\pm$ 0.0} \\   
Suboptimal      & \textbf{0.75 $\pm$ 0.43} & \textbf{0.85 $\pm$ 0.36} &   \textbf{0.90 $\pm$ 0.30}    & \textbf{0.95 $\pm$ 0.22} & \textbf{0.95 $\pm$ 0.22} \\   \midrule

\textbf{IQL} & \textbf{No DA} &  \textbf{Aug-D-AC:}  & \textbf{Aug-D-Online-AC:}  &  \textbf{DAC-Latent-AC:}  & \textbf{DAC-Output-AC:} \\ \cmidrule(r){1-1}
Expert         & \textbf{1.0 $\pm$ 0.0} & \textbf{1.0 $\pm$ 0.0} &   \textbf{1.0 $\pm$ 0.0}    & \textbf{1.0 $\pm$ 0.0} & \textbf{1.0 $\pm$ 0.0} \\  
Mixed         & \textbf{1.0 $\pm$ 0.0} & \textbf{1.0 $\pm$ 0.0} &   \textbf{0.95 $\pm$ 0.22}    & \textbf{1.0 $\pm$ 0.0} & \textbf{1.0 $\pm$ 0.0} \\   
Suboptimal      & \textbf{0.75 $\pm$ 0.43} & \textbf{1.0 $\pm$ 0.0} &   \textbf{0.80 $\pm$ 0.40}    & \textbf{0.85 $\pm$ 0.36} & \textbf{0.95 $\pm$ 0.22} \\   \midrule
\end{tabular}
\end{table}

\end{document}